\pgfplotsset{width=10cm,compat=1.9}
\theoremstyle{definition}
\newtheorem{Definition}{Definition}[section]
\newtheorem*{Example}{Example}
\theoremstyle{plain}
\newtheorem{Theorem}[Definition]{Theorem}
\newtheorem{Corollary}[Definition]{Corollary}
\newtheorem{Lemma}[Definition]{Lemma}
\theoremstyle{remark}
\newtheorem{Remark}[Definition]{Remark}
\newcommand{\R}{\mathbb{R}}
\newcommand{\Q}{\mathbb{Q}}
\newcommand{\N}{\mathbb{N}}
\newcommand{\C}{\mathbb{C}}
\newcommand{\abs}[1]{\ensuremath{\left\lvert#1\right\rvert}}
\newcommand{\aabs}[1]{\ensuremath{\lvert#1\rvert}}
\newcommand{\aaabs}[1]{\ensuremath{\Big\lvert#1\Big\rvert}}
\newcommand{\norm}[2][]{\ensuremath{\left\lVert#2\right\rVert_{#1}}}
\newcommand{\nnorm}[2][]{\ensuremath{\lVert#2\rVert^{}_{#1}}}
\newcommand{\nnormsquaared}[2][]{\ensuremath{\lVert#2\rVert^{2}_{#1}}}
\newcommand{\nnnorm}[2][]{\ensuremath{\Big\lVert#2\Big\rVert_{#1}}}
\DeclareMathOperator*{\argmin}{arg\,min}
\DeclareMathOperator*{\dom}{dom}
\def\ps@IEEEtitlepagestyle{%
  \def\@oddfoot{\mycopyrightnotice}%
  \def\@oddhead{\hbox{}\@IEEEheaderstyle\leftmark\hfil\thepage}\relax
  \def\@evenhead{\@IEEEheaderstyle\thepage\hfil\leftmark\hbox{}}\relax
  \def\@evenfoot{}%
}
\def\mycopyrightnotice{%
  \begin{minipage}{\textwidth}
  \centering \scriptsize
  \copyright~20XX IEEE.  Personal use of this material is permitted.  Permission from IEEE must be obtained for all other uses, in any current or future media, including reprinting/republishing this material for advertising or promotional purposes, creating new collective works, for resale or redistribution to servers or lists, or reuse of any copyrighted component of this work in other works.
  \end{minipage}
}
\begin{document}

\title{Limitations of Deep Learning for Inverse Problems on Digital Hardware}

\author{Holger Boche, \IEEEmembership{Fellow, IEEE}, Adalbert Fono and Gitta Kutyniok, \IEEEmembership{Senior Member, IEEE}
\thanks{This work of Holger Boche was supported in part by the German Federal Ministry of Education and Research (BMBF) in the project Hardware Platforms and Computing Models for Neuromorphic Computing (NeuroCM) under Grant 16ME0442 and within the national initiative on 6G Communication Systems through the Research Hub 6G-life under Grant 16KISK002.}
\thanks{This work of Gitta Kutyniok was supported in part by the Konrad Zuse School of Excellence in Reliable AI (DAAD), the Munich Center for Machine Learning (BMBF) as well as the German Research Foundation under Grants DFG-SPP-2298, KU 1446/31-1 and KU 1446/32-1. Furthermore, G. Kutyniok acknowledges support from LMUexcellent, funded by the Federal Ministry of Education and Research (BMBF) and the Free State of Bavaria under the Excellence Strategy of the Federal Government and the Länder as well as by the Hightech Agenda Bavaria.}
\thanks{Holger Boche is with the Institute of Theoretical Information Technology, Technical University of Munich, Munich, Germany, and also with the BMBF Research Hub 6G-life, with the Excellence Cluster Cyber Security in the Age of Large-Scale Adversaries, Ruhr University Bochum, Bochum, Germany, and also with the Munich Center for Quantum Science and Technology (MCQST), Munich, Germany, and also with the Munich Quantum Valley (MQV), Munich, Germany.}
\thanks{Adalbert Fono is with the Department of Mathematics, Ludwig-Maximilians-Universität München, Germany.}
\thanks{Gitta Kutyniok is with the Department of Mathematics, Ludwig-Maximilians-Universität München, Germany, and also with the Munich Center for Machine Learning (MCML), Munich, Germany.}
}

\markboth{IEEE TRANSACTIONS ON Information Theory}%
{Boche, Fono and Kutyniok: Limitations of Deep Learning for Inverse Problems on Digital Hardware}

\maketitle

\begin{abstract}
Deep neural networks have seen tremendous success over the last years. Since the training is performed on digital hardware, in this paper, we analyze what actually can be computed on current hardware platforms modeled as Turing machines, which would lead to inherent restrictions of deep learning. For this, we focus on the class of inverse problems, which, in particular, encompasses any task to reconstruct data from measurements. 
We prove that finite-dimensional inverse problems are not Banach-Mazur computable for small relaxation parameters.  
Even more, our results introduce a lower bound on the accuracy that can be obtained algorithmically. 
\end{abstract}

\begin{IEEEkeywords}
Computing Theory, Deep Learning, Signal Processing, Turing Machine.
\end{IEEEkeywords}

\section{Introduction}

\IEEEPARstart{T}{he} solution of problems which do not admit a closed form solution is often a computationally tremendously challenging task. In fact, for many problems closed form solutions are not only not known, but can provably not exist \cite{poonen14}. Therefore, general algorithms for solving problems such as Diophantine equations are often not known. In 1900, Hilbert proposed the problem of finding an algorithm which, for any polynomial equation with integer coefficients and a finite number of unknowns \cite{Hilbert00Problems}, decides whether the polynomial possesses integer-valued roots. This problem, commonly known as Hilbert's tenth problem, turned out to be unsolvable --- there does not exist such an algorithm \cite{Matiyasevich70Diophantine}. Further examples include physical processes that are often modeled mathematically in the form of differential equations in order to study their properties, even without exact knowledge of their solutions. This calls for approaches which allow to make profound observations about these processes without explicit access to closed form and exact algorithmic solutions. 

A practical alternative in this regard is to simulate the underlying systems and observe the evolution of the model over time to gain insight into its functioning. The simulated models are often of a continuum nature, i.e., their input and output quantities are described by continuous variables. At first glance, it seems natural to solve or simulate continuous problems on analog computers. Nonetheless, the tremendous success of digital computing and the conviction that any continuous problem can be simulated on digital machines led to the present situation that computations are performed almost exclusively on digital hardware. The question whether a continuous system can be simulated on digital computers lies therefore at the heart of the foundations of signal processing and computer science.

\subsection{Deep Learning}
A completely different approach to tackle various problems has recently emerged in the form of deep learning. Instead of explicitly modelling a process, its goal is to learn the underlying relations by using an \textit{(artificial) neural network}. Inspired by the functionality of the human brain, McCulloch and Pitts \cite{McCulloch43NNs} introduced the idea of a neural network in 1943. Roughly speaking, a neural network consists of neurons arranged in layers and connected by weighted edges; in mathematical terms we can think of a concatenation of affine-linear functions and relatively simple non-linearities.

Triggered by the drastic improvements in computing power of digital computers and the availability of vast amounts of training data, the area has seen overwhelming practical progress in the last fifteen years. Deep neural networks (which in fact inspired the name "deep learning"), i.e., networks with large numbers of layers, have been applied with great success in many different fields, even yielding state of the art results in applications such as image classification \cite{He2015DelvingDI}, playing board games \cite{Silver16Go}, natural language processing \cite{Brown20GPT3}, and protein folding prediction \cite{Senior20DeepFold}. Interestingly, the most significant progress and superhuman abilities seem to occur on discrete problems such as the aforementioned board games including chess and Go. For an in-depth overview, we refer to the survey paper by LeCun, Bengio, and Hinton \cite{LeCun15DL}.

A neural network implements a non-linear mapping and the primary goal is to approximate an unknown function based on a given set (of samples) of input-output value pairs. This is typically accomplished by adjusting the network’s biases and weights according to an optimization process; the standard approach so far is stochastic gradient descent (via backpropagation) \cite{Rumelhart86BP}. The optimization step is usually referred to as the training of a neural network.

An active field of research, which was to a certain extent swept by deep learning approaches over the last several years, is the area of inverse problems in imaging sciences. Image reconstruction from measurements is an important task in a wide range of scientific, industrial, and medical applications such as electron microscopy, seismic imaging, magnetic resonance imaging (MRI) and X-Ray computed tomography (CT). Numerous contributions to solving such inverse problems using deep neural networks could be recently witnessed, claiming to achieve competitive or even superior performance compared to current techniques.

A unified framework for image reconstruction by manifold approximation as a data-driven supervised learning task is proposed in \cite{Zu18AutoMap}. In \cite{Arridge2019SolvingIP}, the authors survey methods solving ill-posed inverse problems combining data-driven models, and in particular those based on deep learning, with domain-specific knowledge contained in physical–analytical models. Approaches to tackle specific inverse problems have been presented for instance in \cite{Bubba19Shearlet} for limited angle CT, in \cite{Yang16MRIDL, Hammernik18MRIDL2} for MRI, in \cite{Chen2018LowLightPhoto} for low-light photography, in \cite{Rivenson17DLmicroscopy} for computational microscopy, and in \cite{Araya18DLtomography} for geophysical imaging.    

Despite their impressive success, it is important to emphasize that neural networks share inherent restrictions with digital computers. Today's neural networks operate on digital hardware, therefore the same limitations concerning continuous problems do apply. More precisely, although neural networks are used seemingly successful on myriads of problems, it is crucial to analyze whether the given problem and input data do in fact allow for an effective computation in the sense that every computed output comes with a guaranteed worst-case error bound. The importance of this question should not be underestimated, since we may unknowingly obtain erroneous outputs for certain problems or specific inputs even if neural networks demonstrated their power for other problems or inputs. In this paper we will analyze this obstacle for a large class of inverse problems, which are a canonical model for, in particular, image reconstruction tasks such as CT, MRI, etc.

\subsection{Digital Computations of Continuous Models}
Digital hardware only allows the exact computation of the solution of finite discrete problems. In case of continuous problems, algorithms on digital hardware can only provide an approximation of the exact solution. In contrast to classical approximation theory, where the mere mathematical existence of an approximation is sufficient, the essential point for computability is that the approximation can be {\em algorithmically} and {\em effectively} computed in a finite number of steps. 
The requirement of effectiveness is particularly important, since it demands that the digital system should not only provide an approximate solution of the continuous problem, but it also has to quantify the error between the approximation and the true (but generally unknown) solution of the continuous problem. Even more, the digital system should be able to provide a solution which is guaranteed to lie within any prescribed error bound. Only if such an effective approximation on a digital system is possible, one would say that the continuous problem is computable (on a digital machine); see \Cref{img:UnCompFunc} for an illustration. These requirements are of particular importance if a machine needs to autonomously apply specific operations to unknown input data. Then the computability of the operations is essential to prevent erroneous calculations. Basically any simulation or numerical solution of a problem on a digital computer relies on the fact that the problem itself is computable. Otherwise, the obtained output may be, unknowingly to the user of the digital machine, far away from the sought true solution as worst-case error bounds are missing. A specific use case where such problems arise is the numerical solution of differential equations \cite{Boche20LTI, Weihrauch02WaveProp, Zhong14EffConvDE, Graca21CompDE}. Another important issue is trustworthiness in engineering in 6G applications, whereby trustworthiness can not be reached without reliable, i.e., computable, systems \cite{Fettweis20226G}.

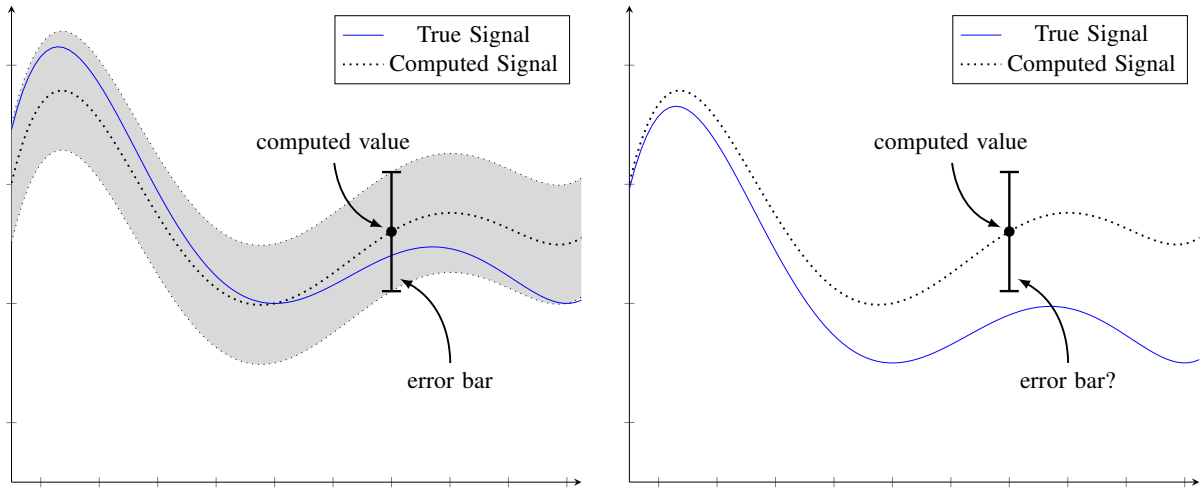
\begin{figure}[ht]
    \centering
\scalebox{0.9}{
\begin{tikzpicture}
\begin{axis}[
    axis lines = left,
    ymin=-0.3, ymax=0.5,
    yticklabel=\empty,
    xticklabel=\empty
]
\addplot [
    domain=-0.9:1.05, 
    samples=100, 
    color=blue,
]
{x^5 -x^4 - x^3 + x^2};
\addlegendentry{True Signal}

\addplot [
    domain=-0.9:1.05, 
    samples=100, 
    dotted,
    thick,
]
{x^5 -x^4 - x^3 + x^2 +0.1*x };
\addlegendentry{Computed Signal}

\addplot [name path=A,black, dotted, domain=-0.9:1.05, samples=100] {x^5 -x^4 - x^3 + x^2 +0.1*x + 0.1 };
\addplot [name path=B,black, dotted, domain=-0.9:1.05, samples=100] {x^5 -x^4 - x^3 + x^2 +0.1*x - 0.1};
\addplot [black!15] fill between[of=A and B];

\addplot[
    color=black,    mark=*, error bars/.cd, y dir=both, y explicit, error bar style={line width=1pt},  error mark options={
      rotate=90, mark size=4pt, line width=1pt}
    ]
    coordinates {
    (0.4,0.12064)  +- (0,0.1)
    };
    
\node[anchor=north] (source) at (axis cs:0.2,0.3){computed value};
\node (destination) at (axis cs:0.4,0.12064){};
\draw[->,-{latex[scale=5]}, thick](source)  to[bend left=-30]  (destination);    

\node[anchor=north] (source2) at (axis cs:0.6,-0.1){error bar};
\node (destination2) at (axis cs:0.4,0.05){};
\draw[->,-{latex[scale=5]}, thick](source2)  to[bend left=-30]  (destination2); 

\end{axis}
\end{tikzpicture}
}
\scalebox{0.9}{
\begin{tikzpicture}
\begin{axis}[
    axis lines = left,
    ymin=-0.3, ymax=0.5,
    yticklabel=\empty,
    xticklabel=\empty
]
\addplot [
    domain=-0.9:1.05, 
    samples=100, 
    color=blue,
]
{x^5 -x^4 - x^3 + x^2 - 0.1};
\addlegendentry{True Signal}

\addplot [
    domain=-0.9:1.05, 
    samples=100, 
    dotted,
    thick,
]
{x^5 -x^4 - x^3 + x^2 +0.1*x };
\addlegendentry{Computed Signal}


\addplot[
    color=black,    mark=*, error bars/.cd, y dir=both, y explicit, error bar style={line width=1pt},  error mark options={
      rotate=90, mark size=4pt, line width=1pt}
    ]
    coordinates {
    (0.4,0.12064)  +- (0,0.1)
    };
    
\node[anchor=north] (source) at (axis cs:0.2,0.3){computed value};
\node (destination) at (axis cs:0.4,0.12064){};
\draw[->,-{latex[scale=5]}, thick](source)  to[bend left=-30]  (destination);    

\node[anchor=north] (source2) at (axis cs:0.6,-0.1){error bar?};
\node (destination2) at (axis cs:0.4,0.05){};
\draw[->,-{latex[scale=5]}, thick](source2)  to[bend left=-30]  (destination2); 

\end{axis}
\end{tikzpicture}
}
\caption{For a computable signal we can always determine an error bar and can then be sure that the true value lies within the specified error range. This situation is depicted on the left. However, for an non-computable signal this guarantee may fail as shown on the right.}
\label{img:UnCompFunc}
\end{figure}
For practical applications the efficiency of an algorithm is of key importance --- not only the correctness, but additionally the required resources such as time, storage, and data are relevant. Classical computer science focusing on discrete problems provides us with suitable notions, for instance, the commonly known classes P and NP constitute a notion of efficiency. Roughly speaking, algorithms in P are considered efficient and applicable in practice whereas algorithms in NP generally do not suffice practical demands. However, before the efficiency of an algorithm can be addressed, one needs to determine the effective solvability first. In this sense, effectiveness is the minimum requirement for any efficient algorithm.

A theoretical model for digital computers is given by Turing machines \cite{Turing36Entscheidung}, which capture the logic of digital computations but neglect real-word limitations like memory constraints, energy consumption, etc. Consequently, Turing machines offer a mathematical model describing the power and limitations of digital computations. Using this model, it has been shown that, if the effectiveness of the approximation is taken into consideration, then not every continuous system can be simulated on a digital machine; examples include the Cauchy problem for the three dimensional wave equation \cite{PourEl97WaveEq}, channel capacities in information theory \cite{Elkouss2018MemoryEC, Schaefer2019TuringMS}, and operations in signal processing such as spectral factorization \cite{Boche20SpecFac} or the Fourier transform of discrete and bandlimited signals \cite{Boche20BandlimitedSignals}. In fact, even very simple analog systems such as stable, linear, and time-invariant systems can generally not be simulated on a digital machine \cite{Boche20LTI}. Only for certain subsets of all admissible input signals can these continuous models be effectively approximated on digital computers. For these reasons, it is important that users of digital simulation software are able to recognize whether their problem and the given input data allow for an effective computation.

The previous discussion shows that it is crucial to be aware of circumstances where the output of a (physical) system can generally not be calculated on a digital machine. In such a situation, we may still ask whether there exist subsets of input signals such that the output of the system can be predicted on a digital machine by controlling algorithmically the approximation error. This leads to the question of determining classes of “good” input signals which are as large as possible. We may even ask, if it is possible to have a machine which is able to decide autonomously whether, for a given input signal, the output can be effectively approximated.

\subsection{Contributions}
Despite the tremendous success of deep learning in various applications, several downsides have been observed. Particularly problematic for their applicability are, for instance, instabilities due to adversarial examples in image classification \cite{Szegedy14AdvEx} and image reconstruction \cite{Antun2020InstabilitiesDL}, the fact that deep neural networks still act as a black box lacking explainability \cite{Xie20XAI, Kolek2021ratedistortion}, and in general a vast gap between theory and practise (see, for instance, \cite{Adcock20gap, Berner2021modernMathDL}). This led to the acknowledgement that a better understanding as well as a mathematical theory of deep learning is in great demand. In fact, these observations might also point to underlying problems of a computability nature as discussed in the previous section. Therefore, our goal is to study the boundaries of deep learning in the following sense: 
\begin{center}
    \textit{What can actually be computed on digital hardware and what are inherent restrictions of deep learning (performed on digital hardware)?}    
\end{center} 
We focus on a specific scenario - finite-dimensional inverse problems - where (prior to the results of [36] and [38]) arbitrarily accurate neural networks were expected to exist \cite{colbrook21stable}, but are not obtained in practice due to flaws such as instability \cite{gottschling20troublesome}.
Although we consider only a very restricted setting, the underlying question(s) are universal and important in basically any application domain, ranging from industry, over the public up to the sciences. 
\begin{center}
    \textit{Are we missing the correct tools and algorithms to train neural networks adequately on digital machines or do such algorithms not exist at all?}
\end{center}
As shown in \cite{colbrook21stable}, the answer for finite dimensional inverse problems is unfortunately negative. In particular, \cite{colbrook21stable} establishes fundamental limitations of any training algorithm in a variety of computational models, including deterministic and randomized Turing machines by introducing lower bounds on the accuracy of any randomized or deterministic algorithmically trained neural network tackling inverse problems. We supplement these results in the deterministic setting by proving non-approximability in \Cref{thm:nonApprox} of inverse problems in the Banach-Mazur sense (see \Cref{sec:preliminaries Computation} for an overview of computability theory).
Thereby, our focus is on inverse problems tackled via basis pursuit and lasso optimization. In this way, we supplement the findings in \cite{bastounis21extended,colbrook21stable} on the study of limitations of deep learning for solving inverse problems. In particular, starting from the Banach-Mazur framework, we show that there exists a lower bound on the achievable accuracy of any algorithm  which performs the training of a neural network on digital hardware. This reinforces the described limits on the accuracy of algorithmically computed approximations of inverse problems via neural networks.

Summarizing, we provide further evidence on the limitations of neural networks for solving finite-dimensional inverse problems given that the computations are performed on Turing machines (as abstract model of digital computers). In the Banach-Mazur setting, the barrier on the capabilities of neural networks is caused by the following two separate aspects:
\begin{itemize}
    \item The mathematical structure and properties of finite-dimensional inverse problems.
    \item The mathematical structure and properties of Turing machines and thereby also of digital machines.
\end{itemize}
It is important to stress that our negative result is due to a combination of these aspects. However, as digital computations are dominant --- at this time --- in basically any field including deep learning, our findings confirm the previously reported \cite{colbrook21stable, bastounis21extended} profound limitations of current deep learning approaches and is consequently of tremendous practical relevance.

\subsection{Related Work}\label{subsec:relatedWork}
Next, we describe the the results in \cite{bastounis21extended, colbrook21stable} concerning the limits of any algorithmic computations tackling inverse problems in more detail. Moreover, classification problems were also analyzed with a similar methodology \cite{bastounis2021Classification}. Based on the notion of a general algorithm \cite{Ben2015SCI} the existence of algorithms \cite{bastounis21extended}, in particular, training algorithms of neural networks \cite{colbrook21stable}, solving finite-dimensional inverse problems was studied. Thereby, the notion of a general algorithm encompasses deterministic as well as probabilistic algorithms and various computing models such as Turing \cite{Turing36Entscheidung} and Blum–Shub–Smale (BSS) \cite{Blum89BSS} machines. The computing models describe the feasible algorithmic operations, i.e., the operations an algorithm can rely on if executed on said (abstract) machine. One key feature of a general algorithm is the provided input representation. A general algorithm is given a rational sequence $(x_n)_{n\in\N}$ approximating the 'true' input $x\in \C^n$ and an error parameter $\varepsilon >0$. The algorithm is asked to output a number which is at least $\varepsilon$-close to the solution set  --- the solution may not be unique --- associated to $x$. Thereby, an algorithm is successful if it can accomplish this task for any feasible input and error parameter. In contrast, the largest $\varepsilon$ for which all algorithms will fail to provide $\varepsilon$-accuracy is called the breakdown epsilon of the problem.

Thus, the question studied in \cite{bastounis21extended} and \cite{colbrook21stable} is the following: Given arbitrarily accurate approximations to any complex number such as, in particular, the training samples, do there exist general algorithms that yield accurate neural networks tackling inverse problems or are there computational barriers preventing the training of neural networks under these conditions? It turns out that these algorithms can not exist in fairly general circumstances, i.e., for any error parameter $\varepsilon= 10^{-k}$, $k>2$, one can find classes of inputs where any algorithm will fail to compute $\varepsilon$-close solutions. Therefore, also the existence of training algorithms for computing neural networks, which output $\varepsilon$-close solutions on the given input class, is excluded i.e., breakdown epsilon exists and are indeed greater than zero. Moreover, the failure of algorithms is not related to ill-conditioned instances; the constructed input classes contain only well-condition problems.

We consider the same fundamental question concerning the existence and achievable accuracy of algorithms. In particular, we are interested in autonomous digital computations modeled by (deterministic) Turing machines.
Therefore, the starting point of our analysis is the theory of recursive functions and mathematical logic and, our findings are strictly restricted to the Turing model of computation. 
Moreover, we assume that the input representation is part of the algorithm(s) so that only computable numbers as inputs are feasible.
Here, a complex number $c$ is computable, if there exists a Turing machine which, given an error bound $2^{-m}, m\in\N$, outputs a rational number with distance to $c$ smaller than $2^{-m}$. Hence, an algorithm takes computable numbers $x$ and an error parameter $\varepsilon>0$ as input and needs to compute an approximation of a fixed element of the solution set corresponding to $x$ with error at most $\varepsilon$. The algorithm is successful if it can accomplish this task for any computable input and error parameter. We will refer to these algorithms as Borel-Turing algorithms. 

As described, a general algorithm in the sense of \cite{bastounis21extended, colbrook21stable} generalizes this concept and additionally captures a multitude of computational models. Hence, when restricting the input domain of a general algorithm to computable numbers, it also describes Borel-Turing algorithms since the input representation in the general algorithm setting is necessarily computable. The restriction of the input domain is a critical feature since Turing machines can not compute all complex numbers to within any desired precision, i.e., the computable complex numbers are indeed a proper subset of the complex numbers. Due to the larger input domain of general algorithms in comparison to Borel-Turing algorithms, it is not a priori clear if the previously described non-existence results of general algorithms also convey to algorithms in Borel-Turing sense. However, an analysis of the proof techniques in \cite{bastounis21extended,colbrook21stable} reveals that it is indeed the case. 
We adopt ideas from \cite{bastounis21extended,colbrook21stable} and embed them in the framework of recursive functions. Thereby, we show that finite-dimensional inverse problems are not Banach-Mazur computable and we establish a lower bound on the approximation accuracy of the solution map of inverse problems via Banach-Mazur computable functions.

Finally, we would also like to mention that the described differences in the input representation are far from trivial, even if our findings and the results in \cite{bastounis21extended,colbrook21stable} are closely related in the Turing model. In \cite{Boche2022InvProb}, it was shown that in the BSS model with exact input representations, i.e., real numbers are considered as entities, the computational barriers for solving finite-dimensional inverse problems observed in the Turing model do not appear. In contrast, the non-existence of general algorithms for solving finite-dimensional inverse problems remains valid in the BSS model with inexact input representations  \cite{bastounis21extended,colbrook21stable}. Hence, based on the input representation, different capabilities in the same computing model may arise.

\subsection{Impact}
The results in \cite{bastounis21extended,colbrook21stable, bastounis2021Classification} as well as our contributions establish a limit on the capabilities of deep learning on digital machines. Since this might cause significant concern, we would like to discuss its implications in more detail.
\begin{itemize}
    \item First, we considered the application of deep learning methods on inverse problems, and our approach and results only concern this use-case. In particular, specific solution strategies --- which are typically applied in practice --- via optimization problems such as basis pursuit \cite{Candes06Stable} and lasso \cite{Tropp06Relax, Belloni11SquareRootLasso} were analyzed. The same solution strategies were also considered in \cite{bastounis21extended, colbrook21stable}. Although this is very strong evidence that inverse problems in general encompass the described limitations, a completely new approach circumventing the limits can not be ruled out, however, it appears to be highly unlikely. Moreover, the need to investigate the capabilities and limitations of deep learning persists beyond (finite-dimensional) inverse problems. Thus, it is important to consider other application fields of deep learning from the point of view of computability, as for instance carried out for classification problems in \cite{bastounis2021Classification}.
    \item Second, the described findings do not question the impressive power of deep learning, but emphasize a necessary caution. The instability and non-robustness of today's networks that lead to unreliable behaviour is a widely accepted phenomenon \cite{Akhtar18ThreatAdvAtt, Carlini18AudioAdvEx, Moosavi16DeepFool, Finlayson19AdvAttMed}. However, it is not clear how to avoid it \cite{Madry18AdvTraining, Papernot16Distillation} and why it arises \cite{Ilyas19AdvExNotBugs, Tsipras18RobustnessOdds, gilmer2018adversarial}. Is it due to some insufficient tools and methods or is it an inherent property of deep learning? We are not able to provide a comprehensive answer to this question at this point, but it is now clear that any algorithm used for training a neural network (solving an inverse problem) on a digital machine has certain computational barriers. This may not be connected to today's instabilities, but it demonstrates that we can not expect perfectly accurate networks for every situation. Additionally, the analysis in \cite{colbrook21stable, bastounis2021Classification} provides insights under which circumstances robust and accurate deep learning may be feasible. At the very least, we need to be aware of these limitations so that the user(s) can assess the potential risk of undesired effects. 
    \item Third, neural networks can generally not be trained up to arbitrary high accuracy in an automated fashion on Turing machines --- meaning that, given a desired accuracy, a Turing machine outputs a network capable of approximating the given task with the prescribed accuracy. 
    Even more, we can specify lower bounds on the achievable precision in the Banach-Mazur setting (see \Cref{thm:nonApprox})). This achievable precision is defined in \cite{colbrook21stable} and \cite{bastounis21extended} as the ‘breakdown epsilons’, wherein similar statements are made for different models of computation.
    \item At last, we stress that our results are inherently connected to digital computations. Using different hardware platforms may lead to different answers. In particular, in deep learning, the emergence of neuromorphic hardware \cite{Mead90NeuroComp, schuman17survey}  --- a combination of digital and analog computations where elements of a computer are modeled after systems in the human brain and nervous system --- demands the considerations of analog computation models given, for instance, by the BSS machine \cite{Blum89BSS}. Intriguingly, the limitations that occur on Turing machines via the Banach-Mazur framework do not necessarily arise in the BSS model \cite{Boche2022InvProb}. Here, the input representation is crucial and explains the difference to general algorithms considered in \cite{bastounis21extended, colbrook21stable}, where the limitations persist also in analog computations modeled by BSS machines.
\end{itemize}

\subsection{Outline}
A concise introduction to the theory of computation with all necessary notions will be presented in \Cref{sec:preliminaries Computation}. This is followed by a description of finite-dimensional inverse problems and how deep learning is applied to find its solutions in \Cref{sec:invProb}. \Cref{sec:mainResult} covers the formal statement of our main results concerning the limitations of deep learning and their proof. We conclude with a short discussion of our findings and their implications in \Cref{sec:discussion}.

\section{Preliminaries from the Theory of Computation}\label{sec:preliminaries Computation}

Already in 1936, Alan Turing addressed the problem of the algorithmic computation of real numbers \cite{Turing36Entscheidung}. He found that only countably many real numbers can be computed, so that most real numbers are not computable. Several different computing models were developed by Turing himself \cite{Turing36Entscheidung}, John von Neumann \cite{vonNeumann45Architektur}, Kurt Gödel \cite{Goedel31Unentscheidbar}, and many others, to identify objects which are algorithmically computable. Intriguingly, it turned out that all these computing models are equivalent. Nowadays, the most well-known and widely applied model by computer scientists is Turing's proposal --- the so-called Turing machine. 

Turing machines are a mathematical model of what we intuitively understand as computation machines that manipulate symbols on a strip of tape according to certain given rules. They yield an abstract idealization of today’s real-world (digital) computers. Any algorithm that can be executed by a real-world computer can, in theory, be simulated by a Turing machine, and vice versa. In contrast to real-world computers, Turing machines are not subject to any restrictions regarding energy consumption, computation time, or memory size. In addition, the computation of a Turing machine is assumed to be executed free of any errors.

Since its introduction, several non-computable operations where discovered, i.e., operations that can not be effectively approximated on a Turing machine and thereby not on a digital computer. Let us exemplarily mention that there exist continuously differentiable and computable functions for which the first derivative is not computable \cite{Myhill71Noncomp}.

Next, we present the relevant notions from computability theory based on Turing machines that will enable us to investigate the questions raised in the introduction. A comprehensive formal introduction on the topic of computability may be found in \cite{Soare87RecursivelyES, Weihrauch00CompAnal, Pour-El17Computability, AvigadBrattka14CompAnal}. 

We first recall two notions from classical computability theory.
\begin{Definition}
    \begin{enumerate}[(1)]
        \item[] 
        \item A set $A \subset \N$ of natural numbers is said to be \textit{recursive} (or computable or decidable), if there exists an algorithm taking natural numbers as input and which correctly decides after a finite amount of time whether the input belongs to $A$ or to the complement set $A^c = \N\setminus\!A$. 
        \item A set $A \subset \N$ is called \textit{recursively enumerable}, if there exists an algorithm, which correctly identifies (after a finite amount of time) all inputs belonging to $A$, but which may not halt if the input does not belong to $A$.
    \end{enumerate}
\end{Definition}

Note that the halting problem for Turing machines ensures the existence of recursively enumerable sets that are non-recursive. Before considering the notion of computability on an uncountable domain, we recall its meaning on a discrete domain like the natural numbers. For this purpose, we introduce the set of $\mu$-\textit{recursive functions}, often simply referred to as \textit{recursive functions} \cite{Kleene36Recursive}, which constitute a special subset of the set $\bigcup_{n=0}^\infty \{f : \N^n \hookrightarrow \N \}$, where '$\hookrightarrow$' denotes a partial mapping. Recursive functions are functions which can be constructed by applying a restricted set of operations, like composition and primitive recursion, on some simple, basic functions like constant functions and the successor function. One can think of the recursive functions as a natural mathematical class of computable functions on the natural numbers.

The considered functions are discrete, i.e., we can expect exact computations on a Turing machine; that is, $f : \N^n \hookrightarrow \N$ is computable if there exists a Turing machine that accepts input $x$ exactly if $f(x)$ is defined, and, upon acceptance, leaves the string $f(x)$ on its tape. Detailed knowledge about recursive functions is not required for our purpose. It suffices to know that Turing-computable functions and recursive functions constitute the same class of function as  shown by Turing himself \cite{Turing37Equivalence}. 
\begin{Lemma}
    A function $f : \N^n \hookrightarrow \N$ is a recursive function if and only if it is computable by a Turing machine. 
\end{Lemma}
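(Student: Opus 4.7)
The plan is to establish the two implications separately, as this is the classical Kleene--Turing equivalence. I would organize the proof around a bidirectional simulation argument, noting that both classes are closed under the same constructors, so the work lies in matching basic building blocks on each side.

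For the direction \emph{recursive $\Rightarrow$ Turing-computable}, I would proceed by structural induction on the definition of $\mu$-recursive functions. First I would exhibit explicit Turing machines computing the initial functions: the constant zero function, the successor $n\mapsto n+1$, and the projections $\pi^n_i$. These are straightforward tape manipulations on a chosen encoding of $\N$ (for instance, unary or binary). Next I would show that the class of Turing-computable partial functions is closed under composition (by sequentially running machines on suitably laid-out tape regions), under primitive recursion (by implementing a counter and iterating the step machine), and under unbounded minimization $\mu$ (by searching through $y=0,1,2,\dots$ and halting when the search witness is found, which naturally yields a partial function with the correct domain). Each closure property is a routine but careful Turing machine construction, and together with the base cases they deliver the forward inclusion.

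The harder direction is \emph{Turing-computable $\Rightarrow$ recursive}, and this is where I expect the main obstacle. The strategy I would follow is the standard arithmetization of Turing machines. Fix a Turing machine $M$ computing $f$. I would Gödel-number tape contents, states, and full instantaneous descriptions (configurations) of $M$ by natural numbers via a primitive recursive pairing function. The key step is to define a primitive recursive predicate $T(e,x,c)$ --- Kleene's $T$-predicate --- asserting that $c$ encodes a valid halting computation history of machine (number) $e$ on input $x$. Proving $T$ primitive recursive reduces to verifying that (i) decoding an encoded configuration, (ii) applying one transition step of $M$, and (iii) checking that successive configurations in the history are related by one step and that the last is a halting one, are all primitive recursive operations on the codes. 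This is the technical core and the part I expect to require the most care.

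Once $T$ is in hand, together with a primitive recursive \emph{result-extracting} function $U$ that reads off the output from a halting computation code, one obtains the normal-form representation $f(x) = U(\mu c.\, T(e,x,c))$, which is manifestly $\mu$-recursive. Extending to $n$-ary inputs is a matter of composing with a primitive recursive tupling function. Combining both directions yields the claimed equivalence, and I would close by remarking that the argument is uniform in $M$, which is the standard way Kleene's enumeration and $s$-$m$-$n$ theorems fall out of the same construction.
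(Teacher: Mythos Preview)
Your outline is the standard and correct proof of the Kleene--Turing equivalence: structural induction on the build-up of $\mu$-recursive functions for one direction, and arithmetization of Turing machine computations via Kleene's $T$-predicate and normal form theorem for the other. There is no gap in the approach you sketch.

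However, there is nothing to compare against: the paper does not prove this lemma. It is stated as a known classical result with a citation to Turing's 1937 paper, and the surrounding text explicitly says that ``detailed knowledge of recursive theory is not required for our purpose'' and that it ``suffices to know'' the equivalence holds. So your proposal supplies a proof where the paper deliberately omits one and defers to the literature.
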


\subsection{Computable Real Numbers}
Our next aim is to define what it means for a real number to be computable. Intuitively, a computable real number is one which can be effectively approximated to any desired degree of precision by a Turing machine given in advance. There are only countably many Turing machines so that only countably many computable real numbers exist. Any rational number is automatically computable and we can characterize computable sequences of rational numbers. 

\begin{Definition}\label{def:CompSeqRat}
    A sequence $(r_k)_{k \in \N}$ of \textit{rational numbers} is \textit{computable}, if there exist three recursive functions $a, b, s : \N \to \N$ such that $b(k) \neq 0$ and
    \begin{equation*} 
        r_k = (-1)^{s(k)} \frac{a(k)}{b(k)} \qquad \text{ for all } k \in \N. 
    \end{equation*}
\end{Definition}

\begin{Remark}
    The definition can be straightforwardly extended to multi-indexed sequences. For a fixed $n \in \N$, a sequence $(r_{k_1,\dots,k_n})_{k_1,\dots,k_n \in \N} \subset \Q$ is then called \textit{computable}, if there exist three recursive functions $a, b, s : \N^n \to \N$ such that 
    \begin{equation*}
        r_{k_1,\dots,k_n} = (-1)^{s(k_1,\dots,k_n)} \frac{a(k_1,\dots,k_n)}{b(k_1,\dots,k_n)} \qquad \text{ for all } k_1,\dots,k_n \in \N. 
    \end{equation*}
\end{Remark}

We can apply computable sequences of rational numbers to establish computability of a real number.

\begin{Definition}
\begin{enumerate}[(1)]
    \item[]
    \item A sequence $(r_k)_{k \in \N}$ of rational numbers \textit{converges effectively} to a real number $x$, if there exists a recursive function $e: \N \to \N$ such that
    \begin{equation*}
        \abs{r_k - x} \leq 2^{-N} 
    \end{equation*}
    holds true for all $N \in \N$ and all $k \geq e(N)$. 
    \item A \textit{real number} $x$ is \textit{computable}, if there exists a computable sequence $(r_k)_{k \in \N}$ of rationals which converges effectively to $x$. We refer to the sequence $(r_k)_{k \in \N}$ as a \textit{representation} for $x$ and denote the set of computable real numbers by $\R_c$.
\end{enumerate}
\end{Definition}

Next, we introduce some properties of computable numbers as well as its extension to multi-dimensional objects.

\begin{Remark}
    \begin{enumerate}[(1)]
        \item The set $\R_c$ is a subfield of $\R$ with countably many elements. Since all rational numbers are computable, $\R_c$ is dense in $\R$. 
        \item A number $x \in \R$ is computable if and only if there exists a computable sequence of rational numbers $(r_k)_{k \in \N}$ such that
        \begin{equation*} 
            \abs{r_k - x} \leq 2^{-k} 
        \end{equation*}
        for all $k\in\N$.
    \end{enumerate}
\end{Remark}
\begin{Definition}
A \textit{vector} $v \in \R^n$ is \textit{computable} if each of its components is computable. Similarly, a \textit{complex number} is \textit{computable} if its real and imaginary parts are computable. We denote the set of \textit{computable complex numbers} by $\C_c = \{x+iy\in\C : x,y\in\R_c\}$. 
\end{Definition}

An important operation is the comparison of two numbers. Unfortunately, it is not straightforward to implement it algorithmically.

\begin{Remark}
    There is no algorithm that decides whether two computable real numbers are equal. The inequality relation between computable reals is recursively enumerable, i.e., given $x,y \in \R_c$, one can effectively decide if $x>y$ or $x<y$, provided that $x\neq y$. However, for rational numbers comparisons can be (effectively) decided \cite{Pour-El17Computability}.
\end{Remark}

\subsection{Computable Sequences of Real Numbers}
Our next goal is to identify computable sequences of real numbers. Surprisingly, such a sequence might not be computable, even though each of its individual elements is. 

\begin{Definition}\label{def:SeqRealNum}
    Let $(x_n)_{n \in \N}, (x_{n,k})_{n,k \in \N}$ be a sequence, respectively a double-indexed sequence, of real numbers such that
    \begin{equation*}
        x_{n,k} \to x_n \quad  \text{ for } k \to \infty \text{ and each } n \in\N.
    \end{equation*}
    We say that $(x_{n,k})_{n,k \in \N}$ \textit{converges} to $(x_n)_{n \in \N}$ \textit{effectively in k and n}, if there exists a recursive function $e: \N \times \N \to \N$ such that
    \begin{equation*} 
        \abs{x_{n,k} - x_n} \leq 2^{-N} 
    \end{equation*}
    holds true for all $n,N \in \N$ and all $k \geq e(n,N)$. A \textit{sequence of real numbers} $(x_n)_{n\in \N}$ is \textit{computable (as a sequence)} if there exists a computable double-indexed sequence of rationals $(r_{n,k})_{n,k \in \N}$ such that $r_{n,k}$ converges to $x_n$ effectively in $k$ and $n$.
\end{Definition}

\begin{Remark}
    In the previous definition one may assume without loss of generality that the recursive function $e: \N \times \N \to \N$ is an increasing function of both variables.
\end{Remark}

Subsequently, we collect some properties of computable sequences and again describe their extension to multi-dimensional objects.

\begin{Remark}
    \begin{enumerate}[(1)]
        \item A sequence $(x_n)_{n\in \N} \subset \R$ is  \textit{computable} if and only if there exists a computable double-indexed sequence $\{r_{n,k}\}_{n,k \in \N} \subset \Q$ such that
        \begin{equation*} 
            \abs{r_{n,k} - x_n} \leq 2^{-k} \quad \text{ for all } k, n \in\N.
        \end{equation*}
        \item One immediately observes that each finite sequence of computable real or complex numbers is computable. 
    \end{enumerate}
\end{Remark}

\begin{Remark}
    We can again straightforwardly extend the computability notion to multi-indexed sequences. For a fixed $n\in\N$, we call a sequence $(x_{k_1,\dots,k_n})_{k_1,\dots,k_n \in \N} \subset \R$ \textit{computable} if and only if there exists exists a computable sequence $\{r_{k_1,\dots,k_{n+1}}\}_{k_1,\dots,k_{n+1} \in \N} \subset \Q$ of rational numbers such that
    \begin{equation*} 
        \abs{r_{k_1,\dots,k_{n+1}} - x_{k_1,\dots,k_n}} \leq 2^{-k_{n+1}} \quad \text{ for all } k_1,\dots,k_{n+1} \in\N.
    \end{equation*}
\end{Remark}

\begin{Definition}
    A \textit{sequence of vectors} $(v_k)_{k\in\N} \subset \R^n$ is \textit{computable} if each of its components is a computable sequence. A \textit{sequence of complex numbers} is \textit{computable} if its real and imaginary parts are computable sequences.
\end{Definition}

An interesting question is under which circumstances the limit of a (computable) sequence is computable. The following example illustrates that simple convergence of a sequence consisting of computable elements is not sufficient to guarantee computability of the limit. Consider a non-computable number $u \in \R$. Due to the fact that $\R_c$ is dense in $\R$, there has to exist a sequence of computable numbers $(x_n)_{n\in\N}$ such that $\abs{u - x_m} \leq 2^{-m}$ for all $m\in\N$. One might think that this contradicts the non-computability of $u$. Although the converging sequence $(x_n)_{n\in\N}$ exists, it is not computable as a sequence despite all individual elements being computable. In other words, the sequence can not be obtained algorithmically.

Under slightly stronger assumptions we obtain a highly useful result concerning the closure of multi-indexed computable sequences (cp. \cite{Pour-El17Computability}).

\begin{Theorem}[Closure under effective convergence]\label{thm:effconv}
Let $(x_{n,k})_{n,k \in \N}\subset \R$ be a computable double-indexed sequence of real numbers, which converges to a sequence $(x_n)_{n \in \N}$ as $k \to \infty$  effectively in $k$ and $n$. Then the sequence $(x_n)_{n \in \N}$ is computable.
\end{Theorem}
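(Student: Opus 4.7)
The goal is to produce, from the data in the hypothesis, a computable double-indexed sequence of rationals that approximates $(x_n)_{n\in\N}$ at rate $2^{-m}$, which by the earlier remark on computable sequences is precisely what computability of $(x_n)_{n\in\N}$ requires. The approach is purely constructive: I would combine the rational approximations given by the computability of $(x_{n,k})_{n,k\in\N}$ with the effective convergence modulus $e$, using a single triangle inequality to split the total error in two halves.

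\textbf{Steps.} First, I would unfold the hypothesis that $(x_{n,k})_{n,k\in\N}$ is a computable double-indexed sequence of reals. By the remark extending computability to multi-indexed sequences, this produces a computable triple-indexed sequence $(r_{n,k,j})_{n,k,j\in\N}\subset\Q$ of rationals satisfying
$$ \abs{r_{n,k,j} - x_{n,k}} \leq 2^{-j} \quad \text{for all } n,k,j \in \N. $$
Second, I would invoke the effective convergence hypothesis to obtain the recursive function $e:\N\times\N\to\N$ with $\abs{x_{n,k}-x_n}\leq 2^{-N}$ whenever $k\geq e(n,N)$. Third, I would define the candidate rational approximation
$$ q_{n,m} := r_{\,n,\; e(n,m+1),\; m+1\,} \quad \text{for all } n,m \in \N. $$
Fourth, by the triangle inequality, setting $k=e(n,m+1)$, I obtain
$$ \abs{q_{n,m} - x_n} \leq \abs{r_{n,k,m+1} - x_{n,k}} + \abs{x_{n,k} - x_n} \leq 2^{-(m+1)} + 2^{-(m+1)} = 2^{-m}, $$
which is exactly the required rate of effective convergence in $m$, uniformly in $n$.

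\textbf{Verifying computability of the new sequence.} The remaining task is to check that $(q_{n,m})_{n,m\in\N}$ is itself a computable double-indexed sequence of rationals. This amounts to showing that the functions $\N^2\to\N$ producing the sign, numerator, and denominator of $q_{n,m}$ are recursive. Each of these is obtained by composing the recursive functions representing $(r_{n,k,j})$ with the pair of recursive functions $(n,m)\mapsto n$ and $(n,m)\mapsto e(n,m+1)$ and $(n,m)\mapsto m+1$. Since the class of recursive (partial) functions is closed under composition with primitive-recursive index manipulations such as successor and projection, $(q_{n,m})$ is a computable rational double-indexed sequence, and hence $(x_n)_{n\in\N}$ is a computable sequence of reals.

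\textbf{Main obstacle.} There is no genuine mathematical obstacle: the proof is a standard diagonalization of the two approximations, paying a factor of $2$ in the accuracy. The only place that requires care is making sure that the substitution $k=e(n,m+1)$ yields an index function that is recursive and that feeding it into the triple-indexed rational sequence still produces a recursively presented double-indexed sequence; this is routine once one has the closure properties of recursive functions under composition in hand.
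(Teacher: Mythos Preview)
Your proof is correct and is in fact the standard diagonal argument for this result. Note, however, that the paper does not supply its own proof of this theorem: it merely states the result and refers to the monograph of Pour-El and Richards, so there is no ``paper's own proof'' to compare against; your argument is essentially the one found in that reference.
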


\subsection{Computable Functions}
Often the computability of practical problems can be reduced to the question whether a certain function is computable by formulating the problems as an input-output relation. Computability of functions is a well-studied property, and there exist various computability notions. We would like to mention Borel- and Markov computable functions or computable continuous functions. In essence, the concept of computability defines the expected input description and the admissible computation steps for solving a given task. For a comprehensive review of different computabillity notions we refer to \cite{AvigadBrattka14CompAnal}. One notion, we will employ in our paper, is the following one.

\begin{Definition}
    A function $f: I \to \R^n_c$, $I \subset \R^m_c$, is called \textit{Borel-Turing computable}, if there exists an algorithm (or Turing machine) that transforms each given computable representation of a computable vector $x \in I$ into a representation for $f(x)$.
\end{Definition}

\begin{Remark}
\begin{enumerate}[(1)]
    \item 
    In general, the representation $(r_k)_{k \in \N}$ of a computable vector $x$ is not unique. Hence, the representation of a Borel-Turing computable function $f$ at $f(x)$ depends on the representation $(r_k)_{k \in \N}$ of $x$ given as input to the algorithm. It is important to be aware of this subtlety, since, for instance, the running time of the algorithm may depend on the input representation and vary accordingly. 
    \item 
    We note that Turing’s original definition of computability conforms to the definition of Borel-Turing computability. From a practical point of view, this can be seen as a minimal requirement for an algorithmic computation of the input-output relation of a problem on perfect digital hardware. The typical approach to solve this task consists of establishing an algorithm that takes the input(s) and computes the output value(s) with a precision depending on the input precision. In particular, the algorithm can determine a sufficient input precision so that the computation terminates with an output within the prescribed worst-case error bound. The whole algorithmic computation comprises therefore the following steps. First, the sought output precision as well as the representation of the input is provided to the algorithm. Then, the algorithm determines an adequate description of the input, which guarantees upon termination of the computation that the presented output satisfies the prescribed precision. Here, one can think of the input representation as a Turing machine, which can be queried with a precision parameter, that provides an approximation of the ``exact'' input.     
\end{enumerate}
\end{Remark}

On Turing machines, the weakest form of computability on the computable numbers is Banach-Mazur computability. If a single-valued function is not Banach–Mazur computable, then it is not computable on perfect digital hardware with respect to any other reasonable notion of computability including Borel-Turing computability. Note that this statement is true only for single-valued functions. For multi-valued functions different notions of algorithmic computation may be introduced which qualify neither as stronger or weaker than Banach-Mazur computability (see \Cref{def:AlgSolv}) and \Cref{rm:genAlg} for further details).

\begin{Definition}[Banach-Mazur computability]
    A function $f: I \to \R^n_c$, $I \subset \R^m_c$, is said to be \textit{Banach-Mazur computable}, if $f$ maps computable sequences $(t_n)_{n\in\N}\subset I$ onto computable sequences $(f(t_n))_{n\in\N}\subset \R^n_c$. 
\end{Definition}

\begin{Remark}
    We can extend Borel-Turing and Banach-Mazur computability in a straightforward way to the complex domain. A function $f: I \to \C^n_c$, $I \subset \C^m_c$ is Borel-Turing, respectively Banach-Mazur, computable if its real and imaginary part are Borel-Turing, respectively Banach-Mazur, computable. Hence, the problem of computability reduces to the real-valued case.
\end{Remark}

Many elementary functions can be identified as computable. Subsequently, we present some examples of computable elementary functions to provide some intuition for the reader.

\begin{Remark}[Elementary functions]\label{rm:elementaryfct}
Let $(x_n)_{n\in \N}$ and $(y_n)_{n\in \N}$ be computable sequences of real numbers. Then the following sequences are computable in all reasonable models --- including in the sense of Banach-Mazur:
\begin{itemize}
    \item $(x_n \pm y_n)_{n\in \N}$, $(x_n y_n)_{n\in \N}$, $(\frac{x_n}{y_n})_{n\in \N}$ (if $y_n \neq 0$ for all $n$)
    \item $(\max{\{x_n,y_n\}})_{n\in \N}, (\min{\{x_n,y_n\}})_{n\in \N}$, $(\abs{x_n})_{n\in \N}$
    \item $(\exp{x_n})_{n\in \N}, (\log{x_n})_{n\in \N}$, $(\sqrt{x_n})_{n\in \N}$
\end{itemize}
\end{Remark}

Another important property is the relation of computable functions and continuity. Often, one can find statements like "Every computable real number function $f : \R^n \to \R$ is continuous" (see \cite{Brattka2016CompAnalHistory}). Hence, in order to show non-computability of a function it suffices to prove that the function is discontinuous. This is certainly not wrong, but the details matter as the following examples shows.
\begin{Example}
    Consider the function $f: [0,1] \to \{0,1\}$ given by
    \begin{equation*}
    f(x) = 
    \begin{cases} 
        1 &: \text{ if } x \text{ is a computable real number},\\[1.5pt] 
        0 &: \text{ else}. 
    \end{cases}    
    \end{equation*} 
    In this situation, $f$ is not continuous but Banach-Mazur computable, since any computable input sequence is necessarily mapped to $1$. Even more, the simplest Turing machine, which ignores the input and always outputs simply $1$, is able to correctly describe the input-output relation of $f$. The apparent contradiction to the aforementioned continuity-statement resolves when taking the input domains into account. While $f$ is well-defined for any input in $[0,1]$, Banach-Mazur computability implicitly expects only computable inputs by definition. Thus, the key is that the discontinuity is computable or can be effectively approximated --- which in case of $f$ does not hold.  
\end{Example}

\begin{Remark}
    In order to circumvent the intricacies related to computable numbers and functions, a restriction to rational inputs might appear as a reasonable approach. However, for most practical problems with continuous input quantities such a simplification is inadequate for at least two reasons:
    \begin{enumerate}
        \item Central values that commonly appear like $\pi$, $e$, etc., are not rational numbers.
        \item Even simple operations like the square root function $\sqrt{\cdot}$ do not map onto the rational numbers in general, e.g., $\sqrt{2} \notin \Q$.
    \end{enumerate}
    Thus, internal quantities of a problem are in most circumstances not rational, i.e., the computations for rational inputs nevertheless require computable real numbers.
\end{Remark}

\begin{Remark}
    Apart from the described practical shortcomings there also arise theoretical obstacles from the restriction to rational inputs. A prime example is given in \cite{Boche2020Fekete}: A computable sequence $(x_n)_{n\in\N}$ of computable real numbers, which additionally satisfy $x_n \in \Q$ for all $n\in\N$, is constructed. However, $(x_n)_{n\in\N}$ does not constitute a computable sequence of rational numbers according to \Cref{def:CompSeqRat}. Hence, identifying a sequence of rational numbers as computable according to \Cref{def:CompSeqRat} is a strictly stronger statement than identifying the same sequence of rational numbers as computable according to \Cref{def:SeqRealNum}.    
\end{Remark}

In certain applications the notion of a function may be too restrictive to model a problem. For example, consider an optimization problem, where the solution to a given input may not be unique. Therefore, the assignment of the input to the output is not unique. However, we can circumvent this obstacle and work with set-valued functions. Although a multi-valued function is in general not a function in its strictest sense, the notion is still typically formalized mathematically as a function.

\begin{Definition}[Multi-valued function]
    A \textit{multi-valued function} $f$ from a set $A$ to a set $B$ is a function $f: A \to \mathcal{P}(B)$, where $\mathcal{P}(B)$ denotes the power set of $B$ (such that $f(a)$ is non-empty for every $a\in A$ or is implicitly assumed as a partial function). We will denote a multi-valued function from $A$ to $B$ by $A\rightrightarrows B$ .
\end{Definition}

In most circumstances it is not necessary to compute the entire solution set of a problem described by a multi-valued map; it suffices to obtain exactly one feasible solution. If the restricted task of computing only one solution can not be carried out, then certainly the entire solution set can not be computed. Therefore, our goal is to study computability of a single-valued restriction of the multi-valued map.

To formalize this concept, note that for a multi-valued function $f:\mathcal{X} \rightrightarrows \mathcal{Y}$ there exists for each input $x \in \dom(f)$ at least one output $y_x^\ast \in f(x) \subset \mathcal{P}(\mathcal{Y})$. A single-valued restriction of $f$ can then be defined as the function
\begin{align*}
    \tilde{f}: \mathcal{X} &\to \mathcal{Y} \\
    x &\mapsto y_x^\ast.    
\end{align*}
We denote by $\mathcal{M}_f$ the set of all single-valued functions associated with the multi-valued function $f$. Hence, the set $\mathcal{M}_f$ encompasses all (single-valued) functions $\tilde{f}$ that are formed by restricting the multi-valued map $f$ to a single value for each input. If there exists at least one function $\hat{f} \in \mathcal{M}_f$ such that $\hat{f}$ is computable, then we can algorithmically solve the problem proposed by $f$.

\begin{Definition}\label{def:AlgSolv}
    A problem with an input-output relation described by a multi-valued function $f:\mathcal{X} \rightrightarrows \mathcal{Y}$ is \textit{algorithmically solvable on a Turing machine} in Borel-Turing or Banach-Mazur sense, if there exists a function $\tilde{f} \in \mathcal{M}_f$ that is Borel-Turing or Banach-Mazur computable, respectively. 
\end{Definition}
In particular, it is not relevant which of the (possibly infinitely many) functions in $\mathcal{M}_f$ is computable, since any of those is an appropriate solution. Even more, it may be the case that most functions in $\mathcal{M}_f$ are non-computable. As long as we succeed to show computability for just one of them, we consider the task algorithmically solvable.
\begin{Remark}\label{rm:genAlg} 
    A different notion of a successful algorithmic computation can be established via the distance to the solution set. This approach is pursued in \cite{colbrook21stable,bastounis21extended} and the (non-)existence of algorithms is studied with respect to an error parameter $\varepsilon$ describing the admissible distance of an algorithmic solution to the solution set. The aforementioned notion of the  breakdown epsilon for a given problem refers to the non-existence of algorithms computing a solution within the prescribed error bound.  More formally, an algorithm $\mathcal{A}$ is given an error parameter $\varepsilon>0$ and access to a representation $(r^x_k)_{k \in \N}$ of an input $x \in \mathcal{X}$ and is required to compute a solution $\mathcal{A}(\varepsilon,(r^x_k)_{k \in \N})$ such that
    \begin{equation}\label{eq:genAlgDist}
        \text{dist}\big(\mathcal{A}(\varepsilon,(r^x_k)_{k \in \N}), f(x)\big) \coloneqq \inf_{y \in f(x)} d (\mathcal{A}(\varepsilon,(r^x_k)_{k \in \N}),y) \leq \varepsilon,
    \end{equation}
    where $d$ is a metric on $\mathcal{Y}$. Hence, $\mathcal{A}(\varepsilon,(r^x_k)_{k \in \N})$ does not approximate a fixed element of the solution set $f(x)$ but only the distance to the whole solution set is controlled as $\varepsilon$ and\,\slash\,or the representation varies. In particular, the closest element in the solution set may depend on the representation and the magnitude of $\varepsilon$. Therefore, the existence of these types of algorithms can not be assessed via the notion of Banach-Mazur computability since the algorithms can not be described by a set of single-valued functions mapping from $\mathcal{X}$ to $\mathcal{Y}$. Thus, a problem characterized by $f$ may not be algorithmically solvable in Banach-Mazur sense, but an algorithm as described in \eqref{eq:genAlgDist} may still exist. Note that this case only arises if $f$ is indeed a multi-valued and not a single-valued function.  
\end{Remark}

\section{Inverse Problems}\label{sec:invProb}

\subsection{Inverse Problems in Imaging}\label{subsec:invProb}
We consider the following finite-dimensional, underdetermined inverse problem:
\begin{equation}\label{eq:problem}
    \text{Given noisy measurements }  y = Ax + e \in \C^m \text{ of } x \in \C^N, \text{ recover } x,
\end{equation}
where $A \in \C^{m \times N}, m< N$, is the \textit{sampling operator} (or measurement matrix), $e \in \C^m$ is a noise vector, $y \in \C^m$ is the \textit{vector of measurements} and $x \in \C^N$ is the object to recover (typically a vectorized discrete image). Classical examples from medical imaging are MRI, where $A$ encodes the Fourier transform, and CT, where $A$ encodes the Radon transform. In practice, the underdetermined setting $m < N$ is common, the reason being that in many applications the number of measurements is severely limited due to time, cost, power or other constraints. Given a specific inverse problem, the objective is to reconstruct (an approximation of) the original data $x$ from knowledge of the measurements $y$, the matrix $A$, and --- ideally --- the noise level $\|e\|$. 

A general solution strategy for inverse problems is to rewrite the model \eqref{eq:problem} in a mathematically more tractable form since \eqref{eq:problem} is in general ill-posed. Aiming to account for uncertainties of the measurements, a relaxed formulation is considered, which has a considerably simpler solution map than the original description \eqref{eq:problem}. Typically, the goal is to express \eqref{eq:problem} as an optimization problem given a sampling operator $A \in \C^{m\times N}$ and a vector of measurements $y \in \C^m$. There exist various formulations of this optimization problem, a straightforward one is given by the least-squares problem
\begin{equation}\label{eq:lsq}
    \argmin_{x\in C^N} \norm[\ell_2]{Ax-y}. \tag{ls}
\end{equation}
A minimizer of the least-squares problem can be straightforwardly obtained via the pseudoinverse of $A$. Unfortunately, the capabilities to compute a minimizer of \eqref{eq:lsq} algorithmically on Turing machines via the pseudoinverse is limited \cite{Boche2022PseudoInverse}. Additionally, the solution of \eqref{eq:lsq} is generally not unique and the individual solutions tend to display different qualitative properties, i.e., not all minimizers of \eqref{eq:lsq} are of the same value when reconsidering the original problem \eqref{eq:problem}. Thus, additional information about the image domain has to be incorporated in the solver, which is referred to as "regularization". A very successful conceptual approach in the last years are sparse regularization techniques \cite{Daubechies04SparseReg}. These methods exploit the inherent sparsity of data such as natural images in fixed transform domains (e.g. wavelets, shearlets, or discrete gradient) and reach state-of-the-art results for various image reconstruction tasks. A special case of this is the highly successful field of compressed sensing \cite{Candes06RobUnc, Donoho06CompSens, Candes06UnivEncStrat, Candes05DecLP}, which revolutionized the efficient acquisition of data. Recently, deep learning techniques, where the reconstruction algorithm can learn a priori information in a much more detailed manner from a training database, have provided a new paradigm with similar or even better performances for solving inverse problems in imaging science \cite{Schlemper18CNNvsCS}.

Typically, solution strategies consider the relaxed formulation via (quadratically constrained) \emph{basis pursuit} \cite{Candes06Stable}
\begin{equation}\label{eq:sparseprob}
    \argmin_{x \in \C^N} \norm[\ell_1]{x} \text{ such that } \norm[\ell_2]{Ax -y} \leq \varepsilon, \tag{bp}
\end{equation}
where the magnitude of $\varepsilon>0$ controls the relaxation, and unconstrained (square-root) \emph{lasso} \cite{Tropp06Relax, Belloni11SquareRootLasso}
\begin{equation}\label{eq:lasso}
    \argmin_{x \in \C^N} \lambda \norm[\ell_1]{x} + \norm[\ell_2]{Ax -y}, \tag{la}
\end{equation}
where $\lambda > 0$ replaces the role of the parameter $\varepsilon$. However, each formulation describes a similar problem to solve. Proving non-computability results for today's digital hardware on the simpler, relaxed problems \eqref{eq:sparseprob} and \eqref{eq:lasso} is a striking computational barrier for practical applications.

\subsection{Deep Learning for Inverse Problems}

In this section, we give a short introduction to deep learning with a particular focus on solving inverse problems \cite{Jin17DCNNInvProb, Adler17DNNInvProb, Ongie20DLInvProb, Schlemper18CNNvsCS, Mousavi15DLvCSSigRec}. For a comprehensive depiction of deep learning theory we refer to \cite{Goodfellow16DL} and \cite{Berner2021modernMathDL}. 

\subsubsection{Prediction Task}

Before we address deep learning theory we shortly dive into classical learning theory connected to prediction tasks. Informally, in a prediction task one is given data in a measurable space $\mathcal{Z}\coloneqq \mathcal{X} \times \mathcal{Y} \subset \R^n \times \R^m$ and a loss function $\mathcal{L}: \mathcal{M}(\mathcal{X},\mathcal{Y}) \times \mathcal{Z} \to [0,\infty]$, where $\mathcal{M}(\mathcal{X},\mathcal{Y})$ denotes the space of measurable functions from $\mathcal{X}$ to $\mathcal{Y}$. The goal is to choose a hypothesis set $\mathcal{F}\subset\mathcal{M}(\mathcal{X},\mathcal{Y})$ and derive a learning algorithm, i.e., a mapping
\begin{equation*}
    \mathcal{A} : \bigcup_{\ell\in\N} \mathcal{Z}^{\ell} \to \mathcal{F},
\end{equation*}
that uses samples $\tau=((x^{(i)}, y^{(i)}))_{i=1}^{\ell} \in \mathcal{Z}^{\ell}$ consisting of features $x^{(i)}\in\mathcal{X}$ and corresponding labels $y^{(i)}\in\mathcal{Y}$ to find a model $f_\tau = \mathcal{A}(\tau) \in \mathcal{F}$ that performs well on the training data $\tau$ and also generalizes to unseen data $z=(x,y)\in\mathcal{Z}$. The ability to generalize is measured via the loss function $\mathcal{L}$ and the corresponding loss $\mathcal{L}(f_\tau,z)$. 

In this paper, we aim to unravel some of the limitations of any such algorithm on digital machines, while focusing on the inverse problem setting. As discussed in Subsection \ref{subsec:invProb}, a very popular choice for the hypothesis set is the set of deep neural networks with a given architecture and the next subsection will give a short introduction into this set. But we would like to stress that our results in fact do hold for any hypothesis set.

\subsubsection{Neural Networks}\label{subsubsec:NNs}

Aiming to present the core idea of neural networks, we consider the simple setting of \textit{feedforward neural networks}, which to a certain extent is also the most common one. From a functional viewpoint, an \textit{L-layer feedforward neural network} is a mapping $\Phi : \R^d \to \R^m$ of the form
\begin{equation}\label{eq:NNdef}
    \Phi(x) = T_L \rho(T_{L-1}\rho(\dots \rho(T_1 x))), \quad x\in \R^d ,
\end{equation}
where $T_{\ell} : \R^{n_{\ell- 1}} \to \R^{n_{\ell}}$, $\ell=1,\dots,L$, are affine-linear maps
\begin{equation*}
    T_{\ell} x = W_{\ell} x + b_{\ell}, \quad W_{\ell} \in \R^{n_{\ell} \times n_{\ell-1}}, b_{\ell} \in \R^{n_{\ell}},
\end{equation*}
$\rho: \R \to \R$ is a non-linear function acting component-wise on a vector. The input and output dimension of the network is represented by $n_0 = d$ and $n_L = m$, respectively. The matrices $W_{\ell}$ are called \textit{weights}, the vectors $b_{\ell}$ \textit{biases}, and the function $\rho$ \textit{activation function}. Together with the \textit{width} $n=(n_1,\dots,n_L)$ and \textit{depth} $L$ the activation function determines the \textit{architecture} of a neural network, see \Cref{img:NN}.

The \textit{class of neural networks} $\mathcal{NN}_{L,n,\rho}$ from $\R^d$ to $\R^m$ describes the possible realizations of neural networks with architecture $(L,n,\rho)$. Realization refers to the mapping associated with a neural network. Hence, $\mathcal{NN}_{L,n,\rho}$ comprises all functions of the form given in \eqref{eq:NNdef}.  
\def\layersep{3.75cm}
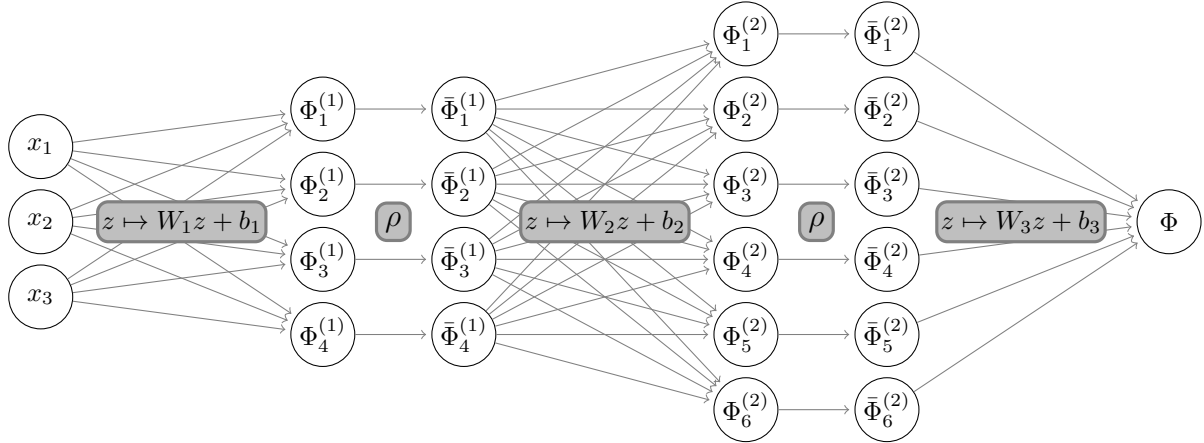
\begin{figure}[ht]
    \centering
\begin{tikzpicture}[shorten >=2pt,draw=black!50, node distance=\layersep]
    \tikzstyle{every pin edge}=[<-,shorten <=1pt]
    \tikzstyle{neuron}=[circle,draw=black,minimum size=17pt,inner sep=0pt]
    \tikzstyle{input neuron}=[neuron, minimum size=0.85cm];
    \tikzstyle{hidden neuron}=[neuron, minimum size=0.85cm];
    \tikzstyle{annot} = [rectangle, draw, very thick, rounded corners, fill=black!25,]
    \tikzstyle{annot2} = [rectangle, draw, very thick, rounded corners, fill=black!25,inner xsep= 1.5pt]

    \foreach \name / \y in {1,...,3}
        \node[input neuron] (I-\name) at (0,-\y) {$x_\y$};
    
    \foreach \name / \y in {1,...,4}
        \path[yshift=0.5cm]
            node[hidden neuron] (H-\name) at (\layersep,-\y cm) {$\Phi^{(1)}_\y$};
            
    \foreach \name / \y in {1,...,4}
        \path[yshift=0.5cm]
           node[hidden neuron] (HA-\name) at (\layersep + \layersep/2,-\y cm) {$\bar{\Phi}^{(1)}_\y$};

    \foreach \name / \y in {1,...,6}
        \path[yshift=1.5cm]
            node[hidden neuron] (H2-\name) at (2*\layersep + \layersep/2,-\y cm) {$\Phi^{(2)}_\y$};
            
    \foreach \name / \y in {1,...,6}
        \path[yshift=1.5cm]
            node[hidden neuron] (HA2-\name) at (3*\layersep,-\y cm) {$\bar{\Phi}^{(2)}_\y$};

    \foreach \name / \y in {1}
        \path[yshift=-1cm]
            node[hidden neuron] (O-\name) at (4*\layersep,-\y cm) {$\Phi$};

    \foreach \source in {1,...,3}
        \foreach \dest in {1,...,4}
            \path (I-\source) edge[->] (H-\dest);

    \foreach \source in {1,...,4}
        \foreach \dest in {1,...,6}
           \path (HA-\source) edge[->] (H2-\dest);

    \foreach \source in {1,...,6}
        \foreach \dest in {1}
            \path (HA2-\source) edge[->] (O-\dest);
        
    \foreach \source in {1,...,4}
        \path (H-\source) edge[->] (HA-\source);
        
    \foreach \source in {1,...,6}
        \path (H2-\source) edge[->] (HA2-\source);
        
    \node[annot, yshift=0.5cm](rho) at (\layersep + \layersep/4,-2.5 cm) {\large $\rho$};
    \node[annot, yshift=0.5cm](rho2) at (2*\layersep + 3*\layersep/4,-2.5 cm) {\large $\rho$};
    
    \node[annot2, yshift=0.5cm](W) at (\layersep/2,-2.5 cm) {$z \mapsto W_1 z + b_1$};
    \node[annot2, yshift=0.5cm](W2) at (2*\layersep,-2.5 cm) {$z \mapsto W_2 z + b_2$};
    \node[annot2, yshift=0.5cm](W3) at (3.475*\layersep,-2.5 cm) {$z \mapsto W_3 z + b_3$};

\end{tikzpicture}
\caption{Graph representation of a fully connected feedforward neural network $\Phi : \R^3 \to \R$ with depth $L = 3$ and widths $n_1 = 4, n_2 = 6, n_3 = 1$. The pre-activations $\Phi^{(\ell)}$ and activations $\bar{\Phi}^{(\ell)}$ of the neurons indicate the intermediate computations, whereby $\Phi^{(1)}(x) = W_1 x + b_1$ and $\Phi^{(\ell+1)}(x) = W_{\ell+1}\bar{\Phi}^{(\ell)}(x) + b_{\ell+1}$ with $\bar{\Phi}^{(\ell)}(x) = \rho(\Phi^{(\ell)}(x))$ for $\ell \in\{1,\dots,L-1\}$.} 
\label{img:NN}
\end{figure}

We now briefly recap the learning procedure of a deep neural network when aiming to solve an inverse problem \eqref{eq:problem}. Neural networks can be incorporated in the solution approach of inverse problems in various ways. Depending on the properties of a given inverse problem or the aspired application, a specific utilization of neural networks in the reconstruction process may seem the most promising. We focus on an end-to-end approach, where the goal is to directly learn a mapping from measurements to reconstructed data; see  \cite{Ongie20DLInvProb} for alternative approaches which employ deep learning at certain steps in the processing pipeline, e.g., in order to learn a regularizer. The end-to-end approach represents the most fundamental method, since it requires no further problem specific knowledge or assumptions. In this situation, our sample pairs consist of measurements $y = Ax +e$ and corresponding images $x$, i.e., the finite \textit{training set} $\tau$ is of the form
\begin{equation*}
    \tau = \{ (y^j, x^j) : y^j = Ax^j+e, j=1,\dots, R \}.
\end{equation*} 
Deep neural networks can easily be adapted to deal with complex-valued input $y \in \C^m$ and output $x \in \C^N$\!\!.
Considering vectors $y^\prime \in \R^{2m}$ and $x^\prime \in \R^{2N}$ consisting of the real and imaginary parts of $y$ and $x$, respectively, reduces the task to the real-valued case with networks mapping from $\R^{2m}$ to $\R^{2N}$\!\!.

Next the architecture, i.e., the depth $L$, the activation function $\rho$, and the width $n=(n_1,\dots,n_L)$ with input dimension $n_0=m$ and output dimension $n_L=N$, is fixed. The resulting class of neural networks $\mathcal{NN}_{L.n,\rho}$ maps from $\C^m$ to $\C^N$ and is parameterized by the weights and biases $(W_{\ell}, b_{\ell})_{\ell=1}^L$. Based on the training data $\tau$, the network $\Phi \in \mathcal{NN}_{L.n,\rho}$ is trained by solving a minimization problem of the form
\begin{equation*} 
    \Phi \in \argmin_{\tilde{\Phi}} \frac{1}{R} \sum_{\ell=1}^R \mathcal{L}(\tilde{\Phi}(y^j),x^j),
\end{equation*}
where $\mathcal{L}$ is an appropriate loss function, e.g., the $\ell_2$-loss defined as 
\begin{equation*}
    \mathcal{L}_{\ell_2}(\tilde{\Phi}(y^j),x^j) \coloneqq \|\tilde{\Phi}(y^j) - x^j\|_{\ell_2}^2.
\end{equation*}
The minimization problem is generally non-convex. Typically, there do not exist closed-form minimizers, instead numerical approaches such as gradient-type methods are required to solve the optimization task. Our concern is the following question: Can the mapping reconstructing images $x$ from measurements $y$ be effectively approximated by a neural network on digital hardware and can this network be algorithmically computed?

\begin{Remark}
    Before turning to the precise problem setting, let us already observe that the mapping realized by a neural network is computable under some very general conditions. Given that the weights, the biases, and the activation function are computable, a neural network is also computable as a composition of computable functions, namely matrix multiplication, vector addition and the application of the activation function. Typically, the learned weights and biases are computable (as the training procedure is executed on a digital computer). Additionally, the most common and universally applied activation function is the ReLU activation $\text{ReLU}(x) = \max\{x,0\}$, which is an elementary computable function. 
    Therefore, the mapping realized by a neural network can be accurately computed on digital hardware in this general setting. The key is whether an effective training algorithm, i.e., an algorithm that reliably carries out the training of a neural network to solve a given task, can exist.    
\end{Remark}

\subsubsection{Our Problem Setting}\label{subsubsec:ProbSetting}
 
We will now describe the specific problem setting as well as the precise question we aim to tackle. To solve an inverse problem (in imaging science) via a deep neural network, we first require a training set sampled from the relaxed inverse problem description. For fixed sampling operator $A$ and optimization parameter $\mu$, the training set consists of finitely many pairs $(y^j,x^j)_{j=1}^R$, where $y^j$ is a measurement and $x^j$ a corresponding solution of an optimization problem $P$, i.e.,
\begin{equation}\label{eq:tauSparse}
    \tau_{P} = \{ (y^j, x^j) : x^j \in P(A,y^j,\mu), j=1,\dots, R \},
\end{equation}
where $P(A,y^j,\mu)$ denotes the set of minimizers for the optimization problem $P$, e.g., \eqref{eq:sparseprob} (for $\mu=\varepsilon$) or \eqref{eq:lasso} (for $\mu = \lambda$). Note that the training set consists of input-output pairs of the optimization problem $P$ and not from the original inverse problem model \eqref{eq:problem}. The training process shall now yield a deep neural network $\Phi_{\tau_{P}}$, which approximates the mapping from measurements to the original data such that the recovered data satisfies the conditions posed by the applied optimization problem. Hence, we ask, for fixed sampling operator $A  \in \C^{m \times N}$ and some fixed optimization parameter $\mu > 0$:
\begin{center}
    \textit{Given measurements $\mathcal{Y} = \{y_k\}_{k=1}^{R} \subset \C^m, R < \infty$, does there exist a neural network approximating the reconstruction map, and can this network be trained by an algorithm in the Banach-Mazur sense?}
\end{center}
Such an algorithm, i.e, a training process, can by definition only exist if the reconstruction map is computable. To study this question denote by 
\begin{align*}
    \Xi_{P,A,\mu}: \C^m &\rightrightarrows \C^N \\
    y &\mapsto P(A,y,\mu)
\end{align*}
the (multi-valued) reconstruction map so that 
\begin{center}
    given a measurement $y \in \C^m$ the set of minimizers $P(A,y,\mu)$ is represented by $\Xi_{P,A,\mu}(y)$.    
\end{center}
Observe that the map $\Xi_{P,A,\mu}$ is in general set-valued, since the solution of the optimization problem does not need to be unique. However, in practical applications one is usually interested in a single-valued reconstruction map. Indeed, a neural network will yield not all eligible but exactly one recovery for each measurement. By considering single-valued restrictions $\Xi^s_{P,A,\mu} \in \mathcal{M}_{\Xi_{P,A,\mu}}$ we can examine the algorithmic solvability of a specific inverse problem determined by the sampling operator $A$, the optimization problem $P$ and parameter $\mu$. However, we are not only interested in approximating the reconstruction map of a specific inverse problem but we want to study a broader research question. Instead of fixing a specific sampling operator, we ask for any sampling operator of dimension $m\times N$: 
\begin{center}
    \textit{Given measurements $\mathcal{Y} = \{y_k\}_{k=1}^{R} \subset \C^m, R < \infty$ and the associated sampling operator $A \in \C^{m \times N}$, does there exist a neural network approximating the reconstruction map, and can this network be trained by an algorithm in the Banach-Mazur sense?}
\end{center}
The crucial aspect is that we look for an algorithm that can be applied to any inverse problem of dimension $m\times N$. Hence, the task is much more difficult than for a fixed inverse problem. The algorithm can not be adapted to properties of a specific inverse problem. Even if we can establish a suitable algorithm for a fixed inverse problem, this fact by no means implies that we can find a general algorithm applicable to any inverse problem. Such an algorithm, i.e, a training process, can by definition only exist if the reconstruction map of any inverse problem of dimension $m\times N$ is computable. More formally, denote by 
\begin{align}\label{eq:mainfunc}
    \Xi_{P,\mu}: \C^{m\times N} \times \C^m &\rightrightarrows \C^N \\  
    (A,y) &\mapsto P(A,y,\mu)\nonumber
\end{align}
the extended (multi-valued) reconstruction map so that 
\begin{center}
    given a sampling operator $A \in \C^{m \times N}$ and an associated measurement $y \in \C^m$ the set of minimizers $P(A,y,\mu)$ is represented by $\Xi_{P,\mu}(A,y)$,
\end{center}
i.e., $\Xi_{P,\mu}(A,y) = \Xi_{P,A,\mu}(y)$. Hence, the map $\Xi_{P,\mu}$ is not associated to a specific sampling operator but represents the recovery operation for any inverse problem described by a sampling operator of fixed dimension $m\times N$ and optimization task $P$ with fixed optimization parameter $\mu$). 
The algorithmic solvability related to any single-valued mapping $\Xi_{P,\mu}^s \in \mathcal{M}_{\Xi_{P,\mu}}$ is the crucial aspect for practical applications; see \Cref{img:AlgSketch}.

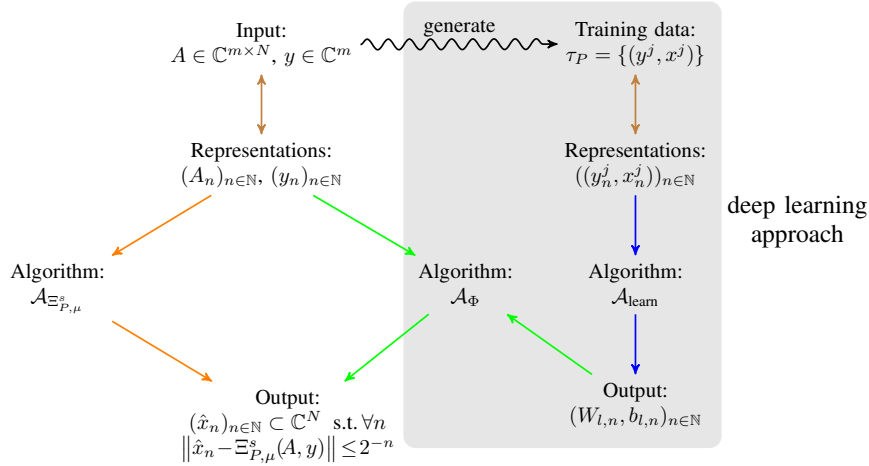
\begin{figure}[ht]
    \centering
\scalebox{0.8}{
\begin{tikzpicture}[->,>=stealth',thick] 
    \node[align=center] (input) {Input: \\ $A \in \C^{m\times N}$, $y\in \C^m$};
    \node[below=of input,align=center] (repr) {Representations:\\ $(A_n)_{n\in\N}$, $(y_n)_{n\in\N}$};
    \node[below left=of repr, align=center] (alg1) {Algorithm:\\ $\mathcal{A}_{\Xi_{P,\mu}^s}$};
    \node[below right=of alg1, align=center] (output) {Output: \\$(\hat{x}_n)_{n\in\N} \subset \C^N$ \! s.t.\! $\forall n$\\ $\norm{\hat{x}_n\! -\! \Xi_{P,\mu}^s(\! A,y)}\! \leq\! 2^{-n}$};
    
    \node[below right=of repr, align=center] (alg3) {Algorithm:\\ $\mathcal{A}_{\Phi}$};

    \node[right=of alg3,align=center] (alg2) {Algorithm:\\ $\mathcal{A}_{\text{learn}}$};
    \node[above=of alg2,align=center, label={[label distance=0.1cm, align=center, font=\large] -15:deep learning \\ approach}] (repr2) {Representations:\\ $ ((y^j_n, x^j_n))_{n\in\N} $};
    \node[above=of repr2,align=center] (data) {Training data:\\ $\tau_{P} = \{ (y^j, x^j)\}$};    
    \node[below=of alg2,align=center] (weights) {Output: \\ $(W_{l,n}, b_{l,n})_{n\in\N}$};

    \draw[<->, brown] (input) edge (repr);
    \draw[orange] (repr) edge (alg1);
    \draw[orange] (alg1) edge (output);
    
    \draw[green] (repr) edge (alg3);
    \draw[green] (alg3) edge (output);
    
    \draw[<->, brown] (data) edge (repr2);
    \draw[blue] (repr2) edge (alg2);
    \draw[blue] (alg2) edge (weights);

    \draw[green] (weights) edge (alg3);
    \draw[decorate,decoration={snake,post length=2mm}] (input) to node[above, midway] (gen) {generate}  (data);
    
    \begin{pgfonlayer}{background}
        \filldraw [line width=4mm,join=round,black!10]
        (data.north -| gen.west) rectangle (weights.south -| weights.east);
    \end{pgfonlayer}
\end{tikzpicture}
}
\caption{The goal is to construct an algorithm $\mathcal{A}_{\Xi_{P,\mu}^s}$ that takes the algorithmically generated representations $(A_n)_{n\in\N}$, $(y_n)_{n\in\N}$ (brown arrow) as input and computes a representation $(\hat{x}_n)_{n\in\N}$ of the reconstruction $\Xi_{P,\mu}^s(A,y)$, $\Xi_{P,\mu}^s \in \mathcal{M}_{\Xi_{P,\mu}}$ (orange arrows). In deep learning, a learning algorithm $\mathcal{A}_{\text{learn}}$ generates based on training data the representation $(\Phi_n)_{n\in\N}$ of a neural network parameterized by the representations $(W_{l,n}, b_{l,n})_{n\in\N}$ of the weights and biases (blue arrows) as defined in \eqref{eq:NNdef}. Subsequently, the algorithm $\mathcal{A}_{\Phi}$ employs $(\Phi_n)_{n\in\N}$ to obtain a representation $(\hat{x}_n)_{n\in\N}$ of the reconstruction from the measurement $y$ (green arrows). We already observed that $\mathcal{A}_{\Phi}$ exists under fairly general conditions in \Cref{subsubsec:NNs}, but does the algorithm $\mathcal{A}_{\text{learn}}$ exist? This can only be the case if inverse problems described by the mapping $\Xi_{P,\mu}$ are algorithmically solvable. 
}
\label{img:AlgSketch}
\end{figure}

\section{Main Result}\label{sec:mainResult}

The goal of this work is to assess whether and to what degree inverse problems described by the solution maps $\Xi_{\text{bp},\varepsilon}$ and $\Xi_{\text{la},\lambda}$ are algorithmically approximable in Banach-Mazur sense, i.e., can we algorithmically approximate optimal reconstructions? Before focusing on the Banach-Mazur setting, we want to briefly consider and formalize this problem from the Borel-Turing perspective:
Does there exist a computable sequence of Borel-Turing computable functions $(\Xi_{P,\mu}^n)_{n\in\N}$ with $\Xi_{P,\mu}^n:\C^{m\times N} \times \C^m \to \C^N$ such that, for all $(A,y) \in \C^{m\times N} \times \C^m$, we have
\begin{equation}\label{eq:appoxseq}
    \norm[\ell_2]{\Xi_{P,\mu}^s(A,y) - \Xi_{P,\mu}^n(A,y)}  < 2^{-n}    
\end{equation}
for some arbitrary function $\Xi_{P,\mu}^s \in \mathcal{M}_{\Xi_{P,\mu}}$? If sequences $(\Xi_{\text{bp},\varepsilon^\ast}^n)_{n\in\N}$ and $(\Xi_{\text{la},\lambda^\ast}^n)_{n\in\N}$ exist for certain optimization parameters $\varepsilon^\ast$ and $\lambda^\ast$, then \eqref{eq:appoxseq} implies that $\Xi^s_{\text{bp},\varepsilon^\ast}$ and $\Xi^s_{\text{la},\lambda^\ast}$ are Borel-Turing computable, respectively; for more details about this implication we refer to \cite{Boche2020SmeetsT}. Similarly, we conclude in the Banach-Mazur setting that an arbitrarily small approximation error implies that inverse problems described by the maps $\Xi_{\text{bp},\varepsilon}$ and $\Xi_{\text{la},\lambda}$ are algorithmically solvable in Banach-Mazur sense. However, our main result characterizes the non-approximability of these maps and establishes a lower bound on the algorithmically achievable accuracy from the Banach-Mazur perspective. Note that in \cite{colbrook21stable, bastounis21extended} similar lower bounds were established under the notion of 'breakdown epsilons' in the general algorithm setting. 
\begin{Theorem}\label{thm:nonApprox}
    Consider the optimization problems \eqref{eq:sparseprob} and \eqref{eq:lasso}, where $N \geq 2$ and $m < N$, for fixed parameters $\varepsilon \in (0,\sfrac{1}{4})$ and $\lambda \in (0,\sfrac{5}{4}) \cap \Q$, respectively. Let $\Xi^s_{\text{bp},\varepsilon} \in \mathcal{M}_{\Xi_{\text{bp},\varepsilon}}$ be an arbitrary (single-valued) function and let $C_{\text{bp}}>0$ be some suitably large constant. Let $\Xi:\C^{m\times N} \times \C^m \to \C^N$ be a function with
    \begin{equation}\label{eq:bound}
        \sup_{\substack{(A,y) \in \C^{m\times N} \times \C^m : \\ \norm{A} \leq K,\norm{y} \leq 1}}  \norm[\ell_2]{\Xi^s_{\text{bp},\varepsilon}(A,y) - \Xi(A,y)}  < \frac{1}{4},
    \end{equation}
    where $K \geq C_{\text{bp}}$ is arbitrary. Then $\Xi$ is not Banach-Mazur computable. 
    Similarly, for arbitrary $\Xi^s_{\text{la},\lambda} \in \mathcal{M}_{\Xi_{\text{la},\lambda}}$ and $K \geq C_{\text{la}}$, where $C_{\text{la}}>0$ is some suitably large constant,
    \begin{equation}\label{eq:nonAppLasso}
        \sup_{\substack{(A,y) \in \C^{m\times N} \times \C^m : \\ \norm{A} \leq K,\norm{y} \leq 1}}  \norm[\ell_2]{\Xi^s_{\text{la},\lambda}(A,y) - \Xi(A,y)}  < \frac{1}{8}
    \end{equation}
    implies that $\Xi:\C^{m\times N} \times \C^m \to \C^N$ is not Banach-Mazur computable. 
\end{Theorem}
\begin{Remark}
    We want to stress that the statement in \Cref{thm:nonApprox} does not depend on the unboundedness of the input domain, but holds true on a compact input set. The value of the constants $C_{\text{bp}}$ and $C_{\text{la}}$ can be explicitly computed and depend on our specific construction in the proof; they do not reflect a fundamental bound but may be improved, i.e., lowered, by further analysis.      
\end{Remark}
A similar result also holds with slightly weakened bounds when considering the distance of a function to the entire solution set.
\begin{Corollary}\label{cor:ApproxSolSet}  
    Consider the optimization problems \eqref{eq:sparseprob} and \eqref{eq:lasso}, where $N \geq 2$ and $m < N$, for fixed parameters $\varepsilon \in (0,\sfrac{1}{4})$ and $\lambda \in (0,\sfrac{5}{4}) \cap \Q$, respectively. Recall the constants $C_{\text{bp}}$ and $C_{\text{la}}$ from \Cref{thm:nonApprox}. Let $\Xi:\C^{m\times N} \times \C^m \to \C^N$ be a function with
    \begin{equation}\label{eq:boundGenAlg}
       \sup_{\substack{(A,y) \in \C^{m\times N} \times \C^m : \\ \norm{A} \leq K,\norm{y} \leq 1}} \text{dist}\big(\Xi_{\text{bp},\varepsilon}(A,y),\Xi(A,y)\big)   < \frac{1}{4} - \delta_{\text{bp}},
    \end{equation}
    where 
    \begin{equation*}
        \text{dist}\big(\Xi_{\text{bp},\varepsilon}(A,y),\Xi(A,y)\big) = \inf_{\Xi^s_{\text{bp},\varepsilon} \in \mathcal{M}_{\Xi_{\text{bp},\varepsilon}}} \norm[\ell_2]{\Xi^s_{\text{bp},\varepsilon}(A,y) - \Xi(A,y)},
    \end{equation*}
    $\delta_{\text{bp}} \in (0,\sfrac{1}{4})$ and $K \geq C_{\text{bp}}$. Then $\Xi$ is not Banach-Mazur computable. 
    Similarly, for  $\delta_{\text{la}}\in (0,\sfrac{1}{8})$ and $K \geq C_{\text{la}}$
    \begin{equation*}
        \sup_{\substack{(A,y) \in \C^{m\times N} \times \C^m : \\ \norm{A} \leq K,\norm{y} \leq 1}} \text{dist}\big(\Xi_{\text{la},\lambda}(A,y),\Xi(A,y)\big) < \frac{1}{8} - \delta_{\text{la}},
    \end{equation*}
    implies that $\Xi:\C^{m\times N} \times \C^m \to \C^N$ is not Banach-Mazur computable. 
\end{Corollary}
\begin{Remark}
    In the setting of \Cref{cor:ApproxSolSet}, Banach-Mazur non-approximability of the solution sets of \eqref{eq:sparseprob} and \eqref{eq:lasso} still relies on the evaluation of single-valued functions on the considered domain. Thus, it is not related to the algorithmic notion described in \Cref{rm:genAlg}, which can not be captured by these single-valued functions --- we refer to \Cref{rm:genAlg} for more details. 
\end{Remark}
Finally, we want to add that the limitations in \Cref{thm:nonApprox} generally do not arise due to poor conditioning of the considered problems. For specific input domains the limitations remain valid while every inverse problem in the input domain is well-conditioned. This fact was already observed in \cite{bastounis21extended, colbrook21stable} and we extend the findings to the Banach-Mazur setting. For an introduction to condition numbers we refer to \cite{Blum98ComplRealComp, Bürgisser2013condition}, the relevant notions for optimization problems are also established in \cite{colbrook21stable} and we provide a brief summary in \Cref{subsec:Cond}.
\begin{Theorem}\label{thm:Cond}
    Consider the optimization problems \eqref{eq:sparseprob} and \eqref{eq:lasso}, where $N \geq 2$ and $m < N$, for fixed parameters $\varepsilon \in (0,\sfrac{1}{4})$ and $\lambda \in (0,1) \cap \Q$, respectively. Then, there exists an input domain $\Omega \subset \C^{m\times N} \times \C^m$ containing only well-conditioned problems such that for any $\Xi^s \in \mathcal{M}_{\Xi_{\text{bp},\varepsilon}} \cup \mathcal{M}_{\Xi_{\text{la},\lambda}}$ and arbitrary Banach-Mazur computable function $\Xi:\Omega \to \C^N$  
    \begin{equation}\label{eq:OmegaCond}
       \sup_{(A,y) \in \Omega} \norm[\ell_2]{\Xi^s(A,y) - \Xi(A,y)}  \geq \frac{1}{10}
    \end{equation}
    holds. In particular, the condition numbers of matrices $A A^H$ with $(A,\cdot) \in \Omega$, the condition of the maps $\Xi_{\text{bp},\varepsilon}$ and $\Xi_{\text{la},\lambda}$ on $\Omega$, and the feasibility primal condition number on $\Omega$ are all bounded by 1.
\end{Theorem}

\subsection{Proof of Banach-Mazur Non-Approximability}

\subsubsection{Architecture of the Proof}

For the convenience of the reader we will first provide higher level insights into the overall architecture of the proof of \Cref{thm:nonApprox}. 

\begin{itemize}
    \item First, following the outline in \cite{colbrook21stable} we characterize in \Cref{lemma:generalNonApprox} general non-approximability conditions in Banach-Mazur sense for functions $\Xi_{P,\mu}$, introduced in \Cref{subsubsec:ProbSetting}, describing the solution set of an optimization problem $P$. 
    \item Informally, we show that the existence of two input sequences which converge to the same limit but whose function values are strictly separated contradicts the approximability of the function in Banach-Mazur sense under some additional requirements.
    \item Subsequently, based on a characterization of the solution sets of basis pursuit and lasso optimization summarized in \Cref{lm:SolSet}, input sequences are constructed that meet the conditions of \Cref{lemma:generalNonApprox}.
    \item Finally, analyzing the input sequences yields the degree of algorithmic non-approximability in Banach-Mazur sense of the problem posed by $\Xi_{\text{bp},\varepsilon}$ and $\Xi_{\text{la},\lambda}$ in \Cref{thm:nonApprox}.
\end{itemize}

\subsubsection{Non-Approximability Lemma}
Next, we will introduce general non-approximability conditions in Banach-Mazur sense of the mappings describing the solutions of inverse problems via optimization problems introduced in \Cref{sec:invProb}.

\begin{Lemma}\label{lemma:generalNonApprox}
    Consider for the optimization problem $P$ with optimization parameter $\mu>0$ the multi-valued mapping $\Xi_{P,\mu}$ defined in \eqref{eq:mainfunc}, i.e.,
    \begin{align*}
        \Xi_{P,\mu}: \C^{m\times N} \times \C^m &\rightrightarrows \C^N \\  
        (A,y) &\mapsto P(A,y,\mu)\nonumber
    \end{align*}
    Choose an arbitrary single-valued restriction $\Psi^s \in \mathcal{M}_{\Xi_{P,\mu}}$ and $\Omega \subseteq \dom(\Psi^s) = \{\omega =(A,y) \in \C^{m\times N} \times \C^m\} \,\vert\, P(\omega,\mu) \neq \emptyset \}$. Further, suppose that there are two computable sequences $(\omega_{n}^1)_{n\in \N}, (\omega_{n}^2)_{n\in \N} \subset \Omega$ satisfying the following conditions:
    \begin{enumerate}[(a)]
        \item There are sets $S^1, S^2 \subset \C^N$ and $\kappa > 0, \kappa \in \Q$ such that $\inf_{x_1 \in S^1, x_2\in S^2} \norm[\ell_2]{x_1 - x_2} > \kappa$ and $\Psi^s(\omega_{n}^j) \in S^j$ for $j = 1, 2$.
        \item There exists $\omega^\ast \in \Omega$ such that $\norm[\ell_2]{\omega_{n}^j - \omega^\ast} \leq 2^{-n}$ for all $n \in \N$, $j = 1, 2$. 
    \end{enumerate}
    In addition, let $\Psi: \Omega_\Psi \to \C^N$, $\Omega \subset \Omega_\Psi \subset \C^{m\times N} \times \C^m$, be an arbitrary function with
    \begin{equation*}
        \sup_{\omega \in \Omega} \norm[\ell_2]{\Psi^s(\omega) - \Psi(\omega)} < \frac{\kappa}{8}.
    \end{equation*}
    Then $\Psi$ is not Banach-Mazur computable.
\end{Lemma}
\begin{Remark}
     For $\omega \in \Omega$ we write $\omega = (\omega_A,\omega_y)$, where $\omega_A \in \C^{m\times N}$ corresponds to the matrix $A$ and $\omega_y \in \C^{m}$ corresponds to the measurement $y$. Furthermore, $\omega_{A_{k,l}}$ and $\omega_{y_k}$ denote the $(k,l)$-th element in $\omega_A$ and the $k$-th element in $\omega_y$, respectively. Moreover, we utilize the norm $\nnorm[\ell_2]{\omega} = \nnorm[\ell_2]{\omega_A} + \nnorm[\ell_2]{\omega_y}$ on $\C^{m\times N} \times \C^{m}$\!\!. Here, $\nnorm[\ell_2]{\omega_A}$ is given by the matrix norm induced by the $\ell_2$-norm.
\end{Remark}
\begin{proof}
    Without loss of generality we may assume that $\Omega, \Omega_\Psi \subset \R^{m\times N} \times \R^{m}$. The complex case $\C^{m\times N} \times \C^{m}$ is a straightforward extension of the real case. This applies, since to show Banach-Mazur computability of a complex-valued function one needs to show Banach-Mazur computability of the real-valued functions representing the real and imaginary parts. Hence, the proof reduces to the real-valued domain. 
    
    The general idea is to 'encode' a recursively enumerable but non-recursive set $B \subset \N$ in $\Omega$ in the following manner: The approximability of $\Psi^s$ on $\Omega$ by a Banach-Mazur computable function $\Psi$ would allow for an algorithm which correctly decides for arbitrary natural numbers the membership to $B$ --- contradicting its non-recursiveness. Thus, $\Psi^s$ can not be approximated in Banach-Mazur sense, i.e., $\Psi$ can not be Banach-Mazur computable. In the first part of the remaining proof we will carry out the encoding of $B$ in $\Omega$. Subsequently, in the second part we will present the contradiction arising from the Banach-Mazur computability of $\Psi$ by establishing the mentioned decision procedure. 

    \textbf{Part 1: Encoding}
    
    \noindent We begin with some observations. Note that condition (b) implies for any $k = 1,\dots,m$ and $l = 1,\dots,N$ that
    \begin{equation}\label{eq:implCondb}
        \aabs{\omega_{A_{k,l},n}^j - \omega^\ast_{A_{k,l}}} \leq 2^{-n} \quad \text{and} \quad \aabs{\omega_{y_k,n}^j - \omega^\ast_{y_k}} \leq 2^{-n} \quad \text{for all } n \in\N, j=1,2.
    \end{equation}
    Therefore, the component-wise sequences $\omega_{A_{k,l},n}^j$ and $\omega_{y_k,n}^j$ converge effectively to $\omega^\ast_{A_{k,l}}$ and $\omega^\ast_{y_k}$, respectively. Thus, $\omega^\ast_{A_{k,l}}$ and $\omega^\ast_{y_k}$ are computable real numbers. Consequently, all components of $\omega^\ast$ are computable, i.e., $\omega^\ast$ is itself computable. Moreover, the following must hold true: 
    \begin{equation*} 
        \norm[\ell_2]{\Psi^s(\omega^\ast) - \Psi^s(\omega_n^1)} > \frac{\kappa}{2} \,\,\, \forall n \in \N \quad \text{ or } \quad \norm[\ell_2]{\Psi^s(\omega^\ast) - \Psi^s(\omega_n^2)} > \frac{\kappa}{2} \,\,\, \forall n \in \N . 
    \end{equation*}
    Otherwise, there exists $n_1, n_2 \in \N$ such that
    \begin{equation*}
        \nnorm[\ell_2]{\Psi^s(\omega^\ast) - \Psi^s(\omega_{n_j}^j)} \leq \frac{\kappa}{2} \quad \text{ for } j=1,2,
    \end{equation*}
    and
    \begin{align*}
        \inf_{x_1 \in S^1, x_2\in S^2} \norm[\ell_2]{x_1 - x_2} &\leq  \inf_{x_1 \in S^1, x_2\in S^2} \norm[\ell_2]{x_1 - \Psi^s(\omega^\ast)} + \norm[\ell_2]{\Psi^s(\omega^\ast) - x_2}\\
        &\leq \norm[\ell_2]{\Psi^s(\omega_{n_1}^1) - \Psi^s(\omega^\ast)} + \norm[\ell_2]{\Psi^s(\omega_{n_2}^2) - \Psi^s(\omega^\ast)}\\
        &\leq \kappa
    \end{align*} 
    contradicts condition (a) which demands $\inf_{x_1 \in S^1, x_2\in S^2} \norm[\ell_2]{x_1 - x_2} > \kappa$. Hence, without loss of generality we can assume that 
    \begin{equation}\label{eq:kappa_cond}
        \norm[\ell_2]{\Psi^s(\omega^\ast) - \Psi^s(\omega_n^1)} > \frac{\kappa}{2} \,\,\, \text{ for all } n \in \N.
    \end{equation}
    
    Now, consider a recursively enumerable but non-recursive set $B \subset \N$. Denote by $TM_B$ a Turing machine which accepts $n\in \N$ if $n\in B$ and does not halt otherwise. We define the double-indexed sequence $(r_{n,j})_{n,j \in \N}$ based on the output of $TM_B(n)$ after $j$ steps of computation. In particular, we set
    \begin{equation*} 
        r_{n,j} = 
        \begin{cases}
            q_n &: \text{ if } TM_B \text{ accepts } n \text{ after at most } j \text{ steps},\\
            j &: \text{ otherwise},
        \end{cases}
    \end{equation*}
    where $q_n\leq j$ denotes the least number of computation steps of $TM_B$ required to accept input $n$.
    
    Next, we show that $(r_{n,j})_{n,j \in \N}$ is a computable double-indexed sequence of rational numbers. Obviously, both $q_n$ and $j$ are natural numbers, hence we need to find a recursive function $a:\N\times\N\to\N$ such that $r_{n,j} = a(n,j)$. 
    Thus, it suffices to describe a Turing machine that computes the function $a$. Informally, we can apply a Turing machine $TM_{sim}$ that simulates $TM_B$ such that $TM_{sim}(n,j)$ runs $TM_B(n)$ for at most $j$ steps. If $TM_B$ stops and accepts $n$, then $TM_{sim}$ outputs the number of required computation steps of $TM_B$, otherwise --- $TM_B(n)$ did not halt after $j$ steps --- it outputs $j$.
    
    Define the sequence $(\hat{\omega}_n)_{n\in\N} \subset  \Omega$ by
    \begin{equation}\label{eq:omegaHat}
        \hat{\omega}_n \coloneqq 
        \begin{cases}
            \omega_{q_n}^1 &: \text{ if } n \in B,\\[3pt] 
            \omega^\ast &: \text{ otherwise},
        \end{cases}
    \end{equation} 
    and note that 
    \begin{equation*}
        \omega^1_{r_{n,j}} \to \hat{\omega}_n \quad \text{ for } j \to \infty, 
    \end{equation*}
    since:
    \begin{itemize}
        \item if $n \in B$, then $TM_B$ stops after a finite number of steps, i.e. $r_{n,j} = q_n$ for $j$ large enough;
        \item if $n \notin B$, then $TM_B$ does not stop, i.e. $r_{n,j} = j$ for any $j$, and therefore $\omega^1_{r_{n,j}} = \omega^1_{j} \to \omega^\ast$ by condition (b).
    \end{itemize}
    The subsequent step is to show that $(\hat{\omega}_n)_{n\in\N}$ is a computable sequence. We consider explicitly the computability of the component-wise sequence $(\hat{\omega}_{A_{k,l},n})_{n\in\N}$ for $k,l=1$. The cases for arbitrary $k = 1,\dots,m$ and $l = 1,\dots,N$ as well as the computability of the component-wise sequences $\hat{\omega}_{y_{k},n}^1$ follow analogously. It suffices to construct a computable double-indexed sequence $(\gamma_{n,j})_{n,j\in\N}$ which converges to the sequence $(\hat{\omega}_{A_{1,1},n})_{n\in\N}$ effectively in $j$ and $n$, i.e., we need to find a recursive function $e^\ast:\N\times\N\to\N$ such that for all $n,M^\ast\in\N$ 
    \begin{equation*} 
        \aabs{\hat{\omega}_{A_{1,1},n} - \gamma_{n,j}} \leq 2^{-M^\ast} \quad  \text{ for } \quad j \geq e^\ast(n,M^\ast).
    \end{equation*}
    For $n,j \in \N$, let $\gamma_{n,j} = \omega_{A_{1,1},r_{n,j}}^1$ and observe that there exists a recursive function $e_r:\N\times\N\to\N$ with $r_{n,j} = e_r(n,j)$ since $(r_{n,j})_{n,j\in\N}$ can be computed by a Turing machine. Moreover, due to the computability of $(\omega_{A_{1,1},n}^1)_{n\in\N}$, there exists a computable double-indexed sequence $(t_{n,m})_{n,m\in\N} \subset \Q$ and a recursive function $e_t: \N \times \N \to \N$ such that
    \begin{equation*} 
        \aabs{\omega_{A_{1,1},n}^1 - t_{n,m}} \leq 2^{-M_t} 
    \end{equation*}
    holds true for all $n,M_t \in \N$ and all $m \geq e_t(n,M_t)$. By definition of a computable sequence of rationals, the elements of the sequence $(t_{n,m})_{n,m\in\N}$ can be computed via recursive functions $a_t,b_t,s_t : \N \times \N \to N$ by
    \begin{equation*}
        t_{n,m} = (-1)^{s_t(n,m)} \frac{a_t(n,m)}{b_t(n,m)} \quad \text{ for } \quad n,m \in \N.
    \end{equation*}
    Thus, the elements of the sequence $(t_{r_{n,j},m})_{n,j,m\in\N}$ can be expressed by the recursive functions $a_t,b_t,s_t,e_r$ as \begin{equation*}
        t_{r_{n,j},m} = (-1)^{s_t(e_r(n,j),m)} \frac{a_t(e_r(n,j),m)}{b_t(e_r(n,j),m)} \quad \text{ for } \quad n,j,m \in \N.
    \end{equation*}
    Observe that $\tilde{a}_t,\tilde{b}_t,\tilde{s}_t : \N^3 \to \N$ given by 
    \begin{equation*}
        \tilde{a}_t(n,j,m) \coloneqq a_t(e_r(n,j),m), \quad \tilde{b}_t(n,j,m) \coloneqq b_t(e_r(n,j),m), \quad \tilde{s}_t(n,j,m) \coloneqq s_t(e_r(n,j),m) 
    \end{equation*}
    are compositions of recursive functions and therefore again recursive. Consequently, the elements of $(t_{r_{n,j},m})_{n,j,m\in\N}$ can be computed via recursive functions $\tilde{a}_t,\tilde{b}_t,\tilde{s}_t$ as
    \begin{equation*}
        t_{r_{n,j},m} = (-1)^{\tilde{s}_t(n,j,m)} \frac{\tilde{a}_t(n,j,m)}{\tilde{b}_t(n,j,m)} \quad \text{ for } \quad n,j,m \in \N.
    \end{equation*}
    Hence, $(t_{r_{n,j},m})_{n,j,m\in\N}$ is a computable sequence of rationals. Finally, note that 
    \begin{equation*}
        \abs{\omega_{A_{1,1},r_{n,j}}^1 - t_{r_{n,j},\ell}} \leq 2^{-L}
    \end{equation*}
    holds for all $n,j,L\in\N$ and all $\ell \geq e_t(e_r(n,j),L)$. Since $e_{t,r}:\N^3\to\N$, defined by
    \begin{equation*}
        e_{t,r}(n,j,\ell) \coloneqq e_t(e_r(n,j),\ell),
    \end{equation*} 
    is recursive as a composition of recursive functions, we can conclude that $(t_{r_{n,j},m})_{n,j,m\in\N}$ converges effectively to $(\omega_{A_{1,1},r_{n,j}}^1)_{n,j\in\N}$, i.e., \Cref{thm:effconv} implies that $(\omega_{A_{1,1},r_{n,j}}^1)_{n,j\in\N}$ is a computable sequence of real numbers. 
    
    It is left to show that $(\omega_{A_{1,1},r_{n,j}}^1)_{n,j\in\N}$ converges effectively to $(\hat{\omega}_{A_{1,1},n})_{n\in\N}$. The assumptions and \eqref{eq:implCondb} imply that $(\omega_{A_{1,1},n}^1)_{n\in\N}$ is a computable sequence which converges effectively to $\omega^\ast_{A_{1,1}}$. Hence, there exists a recursive function $e_{A_{1,1}} :\N\to\N$ such that for all $M\in\N$ we obtain
    \begin{equation*}
        \aabs{\omega^\ast_{A_{1,1}} - \omega_{A_{1,1},n}^1} \leq 2^{-M} \quad  \text{ for } \quad n\geq e_{A_{1,1}}(M).    
    \end{equation*}   
    Define $e^\prime :\N\to\N$ by 
    \begin{equation*}
        e^\prime(z) \coloneqq e_{A_{1,1}}(z+1)   
    \end{equation*} 
    so that $e^\prime$ is recursive. For fixed $n\in \N\setminus\! B$ and arbitrary $M^\prime \in \N$, we have for any $j\geq e^\prime(M^\prime)$ that 
    \begin{align*}
        \aabs{\hat{\omega}_{A_{1,1},n} - \omega_{A_{1,1},r_{n,j}}^1} &= \aabs{\omega^\ast_{A_{1,1}} -\omega_{A_{1,1},r_{n,j}}^1}\\
        &= \aabs{\omega^\ast_{A_{1,1}} -\omega_{A_{1,1},j}^1}\\
        &\leq 2^{-(M^\prime+1)}.
    \end{align*}
    Similarly, for fixed $n\in\N \cap B$ it follows that   
    \begin{align*}
        \aabs{\hat{\omega}_{A_{1,1},n} - \omega_{A_{1,1},r_{n,j}}^1} &= \aabs{\omega_{A_{1,1},q_n}^1 - \omega_{A_{1,1},r_{n,j}}^1}\\
        &=
        \begin{cases} 
            \aabs{\omega_{A_{1,1},q_n}^1 - \omega_{A_{1,1},j}^1} &, \text{ if } j < q_n\\[3pt]
            \aabs{\omega_{A_{1,1},q_n}^1 - \omega_{A_{1,1},q_n}^1} &, \text{ if } j \geq q_n
        \end{cases}\\
        &\leq 
        \begin{cases} 
            \aabs{\omega_{A_{1,1},q_n}^1 - \omega^\ast_{A_{1,1}}} + \aabs{\omega^\ast_{A_{1,1}} - \omega_{A_{1,1},j}^1} &, \text{ if } j < q_n\\[3pt] 
            0 &, \text{ if } j \geq q_n
        \end{cases}\\
        &\leq 
        \begin{cases} 
            2^{-(M^\prime+1)} + 2^{-(M^\prime+1)} &, \text{ if } j < q_n\\[3pt] 
            0 &, \text{ if } j \geq q_n
        \end{cases}\\
        &\leq 2^{-M^\prime}.
    \end{align*}
    Overall, this yields for arbitrary $n\in\N$ that
    \begin{equation*} 
        \aabs{\hat{\omega}_{A_{1,1},n} - \omega_{A_{1,1},r_{n,j}}^1} \leq 2^{-M^\prime} \quad \text{ for } \quad j \geq e^\prime(M^\prime),
    \end{equation*}
    i.e., $(\omega_{A_{1,1},r_{n,j}}^1)_{j\in\N}$ converges effectively to $\hat{\omega}_{A_{1,1},n}$. Finally, setting $e^\ast(v,z) = e^\prime(z)$ guarantees that $e^\ast$ is recursive and yields for all $n,M^\ast\in\N$ that
    \begin{equation*} 
        \aabs{\hat{\omega}_{A_{1,1},n} -  \omega_{A_{1,1},r_{n,j}}^1} \leq 2^{-M^\ast} \quad \text{ for } \quad j \geq e^\ast(n,M^\ast).
    \end{equation*}
    Thereby, we have shown that $(\omega_{A_{1,1},r_{n,j}}^1)_{n,j\in\N}$ converges effectively to $(\hat{\omega}_{A_{1,1},n})_{n\in\N}$. Consequently, \Cref{thm:effconv} implies that $(\hat{\omega}_{A_{1,1},n})_{n\in\N}$ is a computable sequence. Moreover, the same holds true for any $(\hat{\omega}_{A_{k,l},n})_{n\in\N}$ and $(\hat{\omega}_{y_{k},n})_{n\in\N}$. Therefore, all component-wise sequences of $(\hat{\omega}_n)_{n\in\N}$ are computable, i.e., $(\hat{\omega}_n)_{n\in\N}$ is a computable sequence.

    \textbf{Part 2: Decision Procedure}

    \noindent Finally, we combine the previous observations to establish the claim. For the sake of contradiction assume that $\Psi$ is Banach-Mazur computable and satisfies
        \begin{equation}\label{eq:hatPsinon}
            \sup_{\omega \in \Omega} \norm[\ell_2]{\Psi^s(\omega) - \Psi(\omega)} < \frac{\kappa}{8}.
        \end{equation} 
    Then, Banach-Mazur computability of $\Psi$ implies that $\Psi(\omega^\ast)$ and $(\Psi(\hat{\omega}_n))_{n\in\N}$ are a computable vector and a computable sequence, respectively. Hence, the sequence $(s_n)_{n\in\N}$ given by
    \begin{equation*} 
        s_n \coloneqq \norm[\ell_2]{\Psi(\hat{\omega}_n) - \Psi(\omega^\ast)} \quad \text{ for } n\in\N
    \end{equation*}
    is a computable sequence since the $\ell_2$-norm consists of computable elementary functions (see \Cref{rm:elementaryfct}). 
    Observe that by construction of $(\hat{\omega}_n)_{n\in\N}$ in \eqref{eq:omegaHat} we have
    \begin{equation*}
        \begin{split}
            \Psi(\hat{\omega}_n) = 
                \begin{cases} 
                    \Psi(\omega_{q_n}^1) &: \text{ if } n \in B\\[3pt] 
                    \Psi(\omega^\ast) &: \text{ if } n \notin B 
                \end{cases}  
        \end{split}  
        \quad \text{ and } \quad
        \begin{split}
            \Psi^s(\hat{\omega}_n) = 
                \begin{cases} 
                    \Psi^s(\omega_{q_n}^1) &: \text{ if } n \in B\\[3pt] 
                    \Psi^s(\omega^\ast) &: \text{ if } n \notin B 
                \end{cases}  
        \end{split}.  
    \end{equation*}
    Due to \eqref{eq:kappa_cond} and \eqref{eq:hatPsinon} it follows that
    \begin{itemize}
        \item $s_n > \frac{\kappa}{4}$ if $n\in B$,
        since 
        \begin{align*}
            \frac{\kappa}{2} &< \norm[\ell_2]{\Psi^s(\omega^\ast) - \Psi^s(\hat{\omega}_n)} \\
            &\leq \norm[\ell_2]{\Psi^s(\omega^\ast) - \Psi(\omega^\ast)} + \norm[\ell_2]{\Psi(\omega^\ast) -\Psi(\hat{\omega}_n)} + \norm[\ell_2]{\Psi(\hat{\omega}_n) - \Psi^s(\hat{\omega}_n)} \\
            & < \frac{2\kappa}{8} + \norm[\ell_2]{\Psi(\omega^\ast) - \Psi(\hat{\omega}_n)},
        \end{align*}
        and 
        \item  $s_n < \frac{\kappa}{4}$ if $n\in \N\setminus\! B$, since
        \begin{align*}
            s_n &= \norm[\ell_2]{\Psi(\omega^\ast) - \Psi(\hat{\omega}_n)}\\
            &\leq \norm[\ell_2]{\Psi(\omega^\ast) - \Psi^s(\omega^\ast)} + \norm[\ell_2]{\Psi^s(\omega^\ast) - \Psi^s(\hat{\omega}_n)} + \norm[\ell_2]{ \Psi^s(\hat{\omega}_n) - \Psi(\hat{\omega}_n)}\\
            &< \frac{2\kappa}{8}.
        \end{align*}
    \end{itemize}
    This shows that it suffices to decide if $s_n > \sfrac{\kappa}{4}$ or $s_n < \sfrac{\kappa}{4}$ in order to determine whether $n\in B$ or $n\in \N\setminus\! B$. Since $(s_n)_{n\in\N}$ is computable, there exists a computable sequence of rationals $(p_{n,k})_{k\in\N} \subset \Q$ and a recursive function $e_{s}:\N\to\N$ such that for all $M\in\N$
    \begin{equation*} 
         \abs{s_n-p_{n,k}}\leq 2^{-M} \quad \text{ for } \quad k\geq e_{s}(M).  
    \end{equation*}
    In particular, for every $n \in \N$, using that $s_n \neq \sfrac{\kappa}{4}$, we can find $M_n \in \N$ such that
    \begin{equation}\label{eq:condStop}
        \aaabs{\frac{\kappa}{4} - p_{n,e_{s}(M_n)}} > 2^{-M_n}.
    \end{equation}
    Then, comparing the rational numbers $\sfrac{\kappa}{4}$ and $p_{n,e_{s}(M_n)}$ is sufficient to determine whether $n\in B$. Hence, we can establish the following algorithm on Turing machines that decides $B$ in a finite number of steps. On input $n\in \N$, iteratively compute $p_{n,e_{s}(M)}$ for $M=1,2,\dots$ until the condition in \eqref{eq:condStop} is met for some $M=M_n$. Subsequently, output the result of the comparison of $\sfrac{\kappa}{4}$ and $p_{n,e_{s}(M_n)}$. Due to the computability of $(s_n)_{n\in\N}$ and the fact that the final comparison only involves rational numbers, all calculations can be performed independently by a Turing machine. The computed approximations depend on the representation of $(s_n)_{n\in\N}$, however, the outcome of the comparison is independent of the actual approximation. We can conclude that there exists a Turing machine which on input $n$ decides in a finite number of steps if $n$ is in $B$. This effective method contradicts the fact that $B$ is not recursive, i.e., there can not exist a Banach-Mazur computable function $\Psi$ satisfying \eqref{eq:hatPsinon}.  
\end{proof}

\subsubsection{Construction of Input Sequences}
By certifying the conditions of \Cref{lemma:generalNonApprox} for \eqref{eq:sparseprob} and \eqref{eq:lasso} we can derive \Cref{thm:nonApprox}.
Since \Cref{lemma:generalNonApprox} relies on a characterization of the solution sets of basis pursuit and lasso optimization, we first provide an explicit description of the solution set for a specific range of optimization parameters.
\begin{Lemma}[\cite{colbrook21stable}]\label{lm:SolSet}
    Let $N \geq 2$ and consider the problem \eqref{eq:sparseprob} for
    \begin{equation*}
        A = \begin{pmatrix} a_1 & a_2 & \dots & a_N \end{pmatrix} \in \C^{1 \times N}, \quad y=1, \quad \varepsilon \in [0,1),
    \end{equation*}
    where $a_j > 0$ for $j=1,\dots,N$. Then the set of solutions is given by
    \begin{equation*}
        \sum_{j=1}^{N} \frac{t_j(1-\varepsilon)}{a_j} e_j, \quad  s.t. \quad  t_j \in [0,1], \quad \sum_{j=1}^{N} t_j = 1,\quad \text{ and } \quad t_j=0 \textit{ if } a_j < \max_k a_k,    
    \end{equation*}
    where $\{e_j\}_{j=1}^N$ is the canonical basis of $\C^N$. Furthermore, for \eqref{eq:lasso} with
    \begin{equation*}
        A = \lambda \begin{pmatrix} a_1 & a_2 & \dots & a_N \end{pmatrix} \in \C^{1 \times N}, \quad y=1, \quad \lambda>0,
    \end{equation*}
    where $a_j > 1$ for $j=1,\dots,N$, the set of solutions is given by
    \begin{equation*}
        \sum_{j=1}^{N} \frac{t_j}{\lambda a_j} e_j, \quad  s.t. \quad  t_j \in [0,1], \quad \sum_{j=1}^{N} t_j = 1,\quad \text{ and } \quad t_j=0 \textit{ if } a_j < \max_k a_k.
    \end{equation*}
\end{Lemma}
Now, we are ready to prove \Cref{thm:nonApprox}.

\begin{proof}[Proof of \Cref{thm:nonApprox}] \label{pr:Thm4.3}
    We begin with the claim concerning \eqref{eq:sparseprob} and consider the case $m=1$. The general case $m>1$ will follow by an embedding of the one-dimensional construction.
    
    Choose an arbitrary $\Xi^s_{\text{bp},\varepsilon} \in \mathcal{M}_{\Xi_{\text{bp},\varepsilon}}$. Define the sequences $(A^1_n,1)_{n\in\N}$, $(A^2_n,1)_{n\in\N} \subset  \C^{m\times N} \times \C^{m}$ where
    \begin{equation*}
        A^1_n = \begin{pmatrix} a + 2^{-n} & a & \dots & a \end{pmatrix} \quad \text{ and } \quad A^2_n = \begin{pmatrix} a & a + 2^{-n} & a & \dots & a \end{pmatrix}  
    \end{equation*}
    for $n\in\N$ and some $a \in \Q, a> 0$. The sequences converge effectively to the (computable) element $(A^\ast, 1) \in \C^{m\times N} \times \C^{m}$ with $A^\ast = \begin{pmatrix} a & \dots & a \end{pmatrix}$ since
    \begin{align}\label{eq:convOmega}
        \norm[\ell_2]{(A^j_n,1) - (A^\ast, 1)} &= \norm[\ell_2]{A^j_n - A^\ast} + \norm[\ell_2]{1 - 1}\nonumber\\
        &= a + 2^{-n} - a = 2^{-n} \quad \text{ for } j=1,2, \text{ for all } n\in \N.
    \end{align}
    Besides, observe that $(A^1_n,1)_{n\in\N}$ and $(A^2_n,1)_{n\in\N}$ are both computable sequences.  Indeed, the component-wise sequences are given either by constant sequences, which are therefore computable, or by the sequence $(a+2^{-n})_{n\in\N}$. We have
    \begin{equation*}
        a+2^{-n} = \frac{c}{d} + 2^{-n} = \frac{2^n c + d}{d2^n}
    \end{equation*}
    for some $c,d \in \N$. Consequently, there exist recursive functions $g,h:\N\to\N$, given by $g(n) = 2^n c + d$ and $h(n) = 2^n d$, such that 
    \begin{equation*}
        a+2^{-n} = \frac{g(n)}{h(n)}
    \end{equation*}
    for arbitrary $n\in\N$. Since $(a+2^{-n})_{n\in\N}$ can be expressed as the quotient of two recursive functions, it is by definition a computable sequence of rational numbers. 
    
    Finally, setting $S^j \coloneqq \{ (1-\varepsilon) s^{-1} e_j \,|\, s \in (a,a+1/2] \}$ and applying \Cref{lm:SolSet} gives 
    \begin{equation}\label{eq:PsiOmegaseq}
        \Xi^s_{\text{bp},\varepsilon}(A^j_n,1) \in \Xi_{\text{bp},\varepsilon}(A^j_n,1) = \Big\{ \frac{1-\varepsilon}{a+2^{-n}}e_j \Big\} \in S^j \quad \text{ for all } n\in\N, 
    \end{equation}
    and
    \begin{align}\label{eq:inf}
        \inf_{s_1 \in S^1, s_2 \in S^2} \norm[\ell_2]{s_1-s_2} &= \inf_{z_1,z_2 \in (a,a+1/2]} \norm[\ell_2]{(1-\varepsilon)z_1^{-1} e_1 - (1-\varepsilon)z_2^{-1} e_2}\nonumber\\
        &= (1-\varepsilon) \inf_{z_1,z_2 \in (a,a+1/2]} \sqrt{z_1^{-2} + z_2^{-2}}\nonumber \\
        &= \sqrt{2}(1-\varepsilon) (a+\tfrac{1}{2})^{-1} > 0.
    \end{align}
    Hence, the sequences $(A^j_n,1)_{n\in\N}$, $j=1,2$, satisfy all conditions in \Cref{lemma:generalNonApprox} on  
    \begin{equation*}
         \Omega= \bigcup_{j\in\{1,2\}}\bigcup_{n\in\N} \{(A^j_n,1)\} \cup \{(A^\ast,1)\} \quad \text{ with } \quad  \kappa_a = \sqrt{2}(1-\varepsilon) (a+\tfrac{1}{2})^{-1}     
    \end{equation*}
    Thus, invoking \Cref{lemma:generalNonApprox} yields that there does not exist a Banach-Mazur computable function $\Xi:\C^{m\times N} \times \C^m \to \C^N$ with  
    \begin{equation*}
        \sup_{\omega \in \Omega}  \norm[\ell_2]{\Xi^s_{\text{bp},\varepsilon}(\omega) - \Xi(\omega)} < \frac{\kappa_a}{8}.
    \end{equation*}
    To identify the optimal upper bound, we need to optimize $\kappa_a$, i.e., find the corresponding optimal sequence. For fixed $\varepsilon \in (0,\tfrac{1}{4})$ the value of $\kappa_a$ only depends on $a$, where $a$ is an arbitrary positive rational number. Therefore, we can compute
    \begin{equation*}
        \kappa_\varepsilon \coloneqq \sup_{a\in\Q, a>0} \kappa_a = \sup_{a\in\Q, a>0} \frac{\sqrt{2}(1-\varepsilon)}{a+\tfrac{1}{2}} = 2\sqrt{2}(1-\varepsilon)
    \end{equation*}
    and 
    \begin{equation*}
        \inf_{\varepsilon \in (0,\frac{1}{4})} \kappa_\varepsilon = \inf_{\varepsilon \in (0,\frac{1}{4})} 2\sqrt{2}(1-\varepsilon) = \frac{3\sqrt{2}}{2} > 2.
    \end{equation*}
    Consequently, choosing $a$ sufficiently close to 0 yields sequences which satisfy the conditions in \Cref{lemma:generalNonApprox} with $\kappa_a = 2$. Additionally, $\Omega$ is a bounded set for a fixed choice of $a$, in particular, every $(A,y)\in \Omega$ satisfies  $\norm[\ell_2]{y} = 1$ and $C_{bp}\coloneqq \sup_{(A,y) \in \Omega} \norm[\ell_2]{A} < \infty$. Thus, the claim regarding \eqref{eq:sparseprob} follows for $m=1$.

    In case of arbitrary $m>1$, observe that for 
    \begin{equation*}
        y^{(m)} = \begin{pmatrix} y & 0 & \dots & 0 \end{pmatrix} \in \C^m \quad \text{ and } \quad A^{(m)} = \begin{pmatrix} A & 0 \\ 0 & I_{m-1} \end{pmatrix} \in \C^{m\times (N+m-1)},
    \end{equation*}
    where $A \in \C^{1\times N}$, $y \in \C$ and $I_{m-1}\in \C^{(m-1)\times(m-1)}$ is the identity matrix, the constraint in \eqref{eq:sparseprob} reads as 
    \begin{align*}
        \nnormsquaared[\ell_2]{A^{(m)}x - y^{(m)}} &= \big((A^{(m)} x e_1 - y^{(m)})^T e_1\big)^2 + \sum_{i=2}^m \big((A^{(m)} x - y^{(m)})^T e_i \big)^2\\
        &= (Ax^{(N)}-y)^2 + (x^{(m-1)})^T x^{(m-1)}\\
        &\quad \text{ for any } x = \begin{pmatrix} x^{(N)} \\ x^{(m-1)} \end{pmatrix} \in \C^{N+m-1}.
    \end{align*}
    Hence, any minimum norm solution of \eqref{eq:sparseprob} requires $x^{(m-1)}$ to be zero. Thus, we can express \eqref{eq:sparseprob} for given $(A^{(m)},y^{(m)})$ equivalently by
    \begin{equation*}
        \argmin_{x^{(N)} \in \C^{N}}\,\, \nnorm[\ell_1]{x^{(N)} } \text{ such that } \nnorm[\ell_2]{A x^{(N)} - y} \leq \varepsilon.
    \end{equation*}
    Recall that by construction $A \in \C^{1\times N}$, i.e., we have reduced the problem to the one-dimensional case and are again in the previously considered setting. 

    With minor modifications we can derive the statement concerning \eqref{eq:lasso}. It suffices to consider the one-dimensional case $m=1$ again since a similar embedding as for \eqref{eq:sparseprob} can be constructed. Let $\Xi^s_{\text{la},\lambda} \in \mathcal{M}_{\Xi_{\text{la},\lambda}}$ and note that the sequences $(\lambda A^1_{n+3},1)_{n\in\N}$ and $(\lambda A^2_{n+3},1)_{n\in\N}$, where we require that $a \in \Q, a> 1$, satisfy the assumptions in \Cref{lemma:generalNonApprox}. In particular, via \Cref{lm:SolSet} we obtain 
    \begin{equation*}
         \Xi^s_{\text{la},\lambda}(A^j_{n+3},1)\in \Xi_{\text{la},\lambda}(A^j_{n+3},1) = \{ \lambda^{-1}(a+2^{-(n+3)})^{-1}e_j\}  \in S^j \quad \text{ for all } n\in\N 
    \end{equation*}
    and $S^j \coloneqq \{ \lambda^{-1} s^{-1} e_j \,|\, s \in (a,a+2^{-4}] \}$. Since
    \begin{equation*}
        \inf_{s_1 \in S^1, s_2 \in S^2} \norm[\ell_2]{s_1-s_2}= \frac{\sqrt{2}}{\lambda} \big(a+2^{-4}\big)^{-1}
    \end{equation*}    
    we have by \Cref{lemma:generalNonApprox} that there does not exist a Banach-Mazur computable function $\Xi:\C^{m\times N} \times \C^m \to \C^N$ with  
    \begin{equation*}
        \sup_{\omega \in \Omega}  \norm[\ell_2]{\Xi^s_{\text{la},\lambda}(\omega) - \Xi(\omega)} < \frac{\kappa_a}{8},
    \end{equation*}
    where 
    \begin{equation*}
         \Omega= \bigcup_{j\in\{1,2\}}\bigcup_{n\in\N} \{(\lambda A^j_{n+3},1)\} \cup \{(\lambda A^\ast,1)\} \quad \text{ with } \quad  \kappa_a = \frac{\sqrt{2}}{\lambda} \big(a+2^{-4}\big)^{-1}.
    \end{equation*}
    Finally, choosing the optimal $\kappa_a$ via 
    \begin{equation*}
        \kappa_\lambda \coloneqq \sup_{a\in\Q, a>1} \kappa_a = \sup_{a\in\Q, a>1} \frac{\sqrt{2}}{\lambda} \big(a+2^{-4}\big)^{-1} = \frac{\sqrt{2}}{\lambda(1+2^{-4})}
    \end{equation*}
    and 
    \begin{equation*}
        \inf_{\lambda \in (0,\frac{5}{4})} \kappa_\lambda = \inf_{\lambda \in (0,\frac{5}{4})} \frac{\sqrt{2}}{\lambda(1+2^{-4})} = \frac{4 \sqrt{2}}{5(1+2^{-4})} > 1
    \end{equation*}
    gives the claim.
\end{proof}

\begin{Remark}
    With minor modifications to the applied constructions, the bounds in \eqref{eq:bound} and \eqref{eq:nonAppLasso} can be strengthened, i.e., the range of the optimization parameters as well as the non-approximation bounds increased and the constants decreased. The stated quantities in \Cref{thm:nonApprox} were derived by analyzing specific sequences constructed in the proof. By further optimizing the construction stronger results can be obtained, e.g., it easily follows from \eqref{eq:PsiOmegaseq} and \eqref{eq:inf} that removing the first $k$ elements, for some $k \in \N$, from the considered sequences strengthens the quantities in the basis pursuit instance. 
\end{Remark}

\Cref{thm:nonApprox} immediately implies the statement in \Cref{cor:ApproxSolSet} concerning the non-approximability in Banach-Mazur sense of the entire solution set.
\begin{proof}[Proof of \Cref{cor:ApproxSolSet}]
    We consider the statement related to \eqref{eq:sparseprob}, the result for \eqref{eq:lasso} follows analogously. Let $\Xi^s_{\text{bp},\varepsilon} \in \mathcal{M}_{\Xi_{\text{bp},\varepsilon}}$ be a single-valued solution function which maps feasible inputs $(A,y)\in \C^{m\times N}\times \C^m$ to $x_{A,y} \in \Xi_{\text{bp},\varepsilon}$ such that $x_{A,y}$ additionally satisfies
    \begin{equation*}
        \norm[\ell_2]{x_{A,y} - \Xi(A,y)} < \frac{1}{4} -\frac{\delta_{\text{bp}}}{2} \quad \text{ on } \{(A,y) \in \C^{m\times N} \times \C^m \,|\, \\ \norm[\ell_2]{A} \leq K,\norm[\ell_2]{y} \leq 1\}
    \end{equation*}
    for $K \geq C_{\text{bp}}$. Note that \eqref{eq:boundGenAlg} guarantees the existence of $\Xi^s_{\text{bp},\varepsilon}$ and we obtain
    \begin{equation*}
        \sup_{\substack{(A,y) \in \C^{m\times N} \times \C^m : \\ \norm{A} \leq K,\norm{y} \leq 1}} \norm[\ell_2]{\Xi^s_{\text{bp},\varepsilon}(A,y) - \Xi(A,y)} =  \sup_{\substack{(A,y) \in \C^{m\times N} \times \C^m : \\ \norm{A} \leq K,\norm{y} \leq 1}} \norm[\ell_2]{x_{A,y}- \Xi(A,y)} < \frac{1}{4}.
    \end{equation*}
    Hence, \Cref{thm:nonApprox} yields that $\Xi$ is not Banach-Mazur computable.
\end{proof}

\subsection{Proof of Well-Conditioning}\label{subsec:Cond}
Next, we will give a short overview of standard notions of condition numbers related to inverse problem and optimization. For more detailed derivations and explanations we refer to \cite{Blum98ComplRealComp, Bürgisser2013condition}. We focus on the condition number of matrices and multi-valued mappings as well as the distance to infeasibility. The outline and the applied proof techniques closely follow \cite{colbrook21stable, bastounis21extended}.

For consistency reasons we apply the matrix norm induced by the $\ell_2$-norm, however, the following notions also are valid with respect to different norms. The classical condition number of an invertible matrix $A$ is given by 
\begin{equation*}
    \kappa(A) = \norm[\ell_2]{A}\nnorm[\ell_2]{A^{-1}}.    
\end{equation*}
The condition of a multi-valued mapping $\Xi : \Omega \subset \C^n \rightrightarrows \C^m$ depends on the set of relevant perturbations. Denote by
\begin{equation*}
    \text{Act}(\Omega) = \{ j \in \{1,\dots,n\} : \exists \, v,w \in \Omega \text{ with } v_j \neq w_j\}
\end{equation*}
the set of coordinates that are varying in $\Omega$ and, for $\alpha>0$ (where $\alpha=\infty$ is valid as well), by
\begin{equation*}
    \Omega_\alpha = \{ v \in \C^n : \exists\, w \in \Omega \text{ with } \norm[\ell_\infty]{v-w} \leq \alpha \text{ and } v_{\text{Act}(\Omega)^c} = w_{\text{Act}(\Omega)^c}\}
\end{equation*}
the set of $\alpha$-perturbations along the non-constant coordinates of elements in $\Omega$. Then, the condition number of the mapping $\Xi$ is 
\begin{equation*}
    \kappa(\Xi,\Omega) = \sup_{v \in \Omega} \lim_{\delta \to 0^+} \sup_{\substack{v +w \in \Omega_\alpha : \\ 0<\norm{w} \leq \delta}} \frac{\text{dist}\big(\Xi(v+w),\Xi(v)\big)}{\norm[\ell_2]{w}},      
\end{equation*}
where we assumed that $\Xi$ is also defined on $\Omega_\alpha$. Here, $\text{dist}(\cdot,\cdot)$ denotes the usual distance function on sets, i.e.,
\begin{equation*}
    \text{dist}\big(\Xi(v+w),\Xi(v)\big) = \inf_{z_1 \in \Xi(v+w), z_2 \in \Xi(v) } \norm[\ell_2]{z_1 - z_2}.
\end{equation*}
Finally, the distance to infeasibility is established for basis pursuit (for lasso is the following condition number is always zero), i.e., $\Xi=\Xi_{\text{bp},\varepsilon}$, via 
\begin{align*}
    \alpha(A,y) = \sup \{ \delta\geq 0 : \norm[\ell_2]{\hat{y}}, \nnorm[\ell_2]{\hat{A}} &\leq \delta, (A+\hat{A}, y +\hat{y}) \in \Omega_\infty \\ 
    &\implies (A+\hat{A}, y +\hat{y}) \text{ are feasible inputs to } \Xi_{\text{bp},\varepsilon}\}.
\end{align*}
Here, the input to $\Xi_{\text{bp},\varepsilon}$ is feasible, if the solution set of the corresponding optimization problem is non-empty. We define the Feasibility Primal (FP) condition number as
\begin{equation*}
    \kappa_{\text{FP}}(A,y) = \frac{\max \{\norm[\ell_2]{y},\norm[\ell_2]{A}\}}{\alpha(A,y)}.
\end{equation*}

Now, we are ready to apply the introduced notions of condition numbers to our problem setting.

\begin{proof}[Proof of \Cref{thm:Cond}]
    We will analyze the one-dimensional case $m=1$. The general case $m>1$ follows from the following embedding. We already observed in the \hyperref[pr:Thm4.3]{proof} of \Cref{thm:nonApprox} that, for 
    \begin{equation*}
        y^{(m)} = \begin{pmatrix} y & 0 & \dots & 0 \end{pmatrix} \in \C^m \quad \text{ and } \quad A^{(m)} = \begin{pmatrix} A & 0 \\ 0 & I_{m-1} \end{pmatrix} \in \C^{m\times (N+m-1)},
    \end{equation*}
    where $A \in \C^{1\times N}$, $y \in \C$ and $I_{m-1}\in \C^{(m-1)\times(m-1)}$ is the identity matrix, the optimization problems reduce to a one-dimensional task involving $A$ and $y$. By introducing an additional parameter $\beta_A \in \C$ via
    \begin{equation*}
        A^{(m)} = \begin{pmatrix} A & 0 \\ 0 & \beta_A I_{m-1} \end{pmatrix},    
    \end{equation*}
    where $\beta_A$ is chosen such that $A A^H$ is a multiple of the identity, we can guarantee that the embedding does not affect the considered condition numbers -- the matrix condition, the distance to infeasibility and the conditions of the mappings $\Xi_{\text{bp},\varepsilon}$, $\Xi_{\text{la},\lambda}$ remain unchanged. Therefore, it is indeed sufficient to construct $\Omega$ for $m=1$. Set
    \begin{equation*}
        \Omega = \{ (A,y) \in \C^{m\times N}\times \C^m: y=1, A= \begin{pmatrix} a_1 & a_2 & \dots & a_N \end{pmatrix} \text{ with } a_i \in (1, b) \} 
    \end{equation*}
    for some $ b \geq \sfrac{3}{2}$. Next, we check that the condition number bounds hold. Let $(A,y) \in \Omega$ and note that $\kappa(A A^H) = 1$. Moreover, any $(A^\prime,y) \in \C^{m\times N}\times \C^m$ is also in $\Omega_\infty$ so that $(A+A^\prime,y) \in \Omega_\infty$. If $\nnorm[\ell_2]{A^\prime} < \norm[\ell_2]{A}$, then $(A+A^\prime,y)$ is a feasible input to $\Xi_{\text{bp},\varepsilon}$, i.e., $\alpha(A,y) \geq \norm[\ell_2]{A} > 1$. Thus, we obtain
        \begin{equation*}
            \kappa_{\text{FP}}(A,y) = \frac{\max \{\norm[\ell_2]{y},\norm[\ell_2]{A}\}}{\alpha(A,y)} \leq  \max \Big\{\frac{1}{\norm[\ell_2]{A}}, 1 \Big\}  \leq 1.    
        \end{equation*}
    It is left to show that the condition of the mappings are bounded. Let $(A^\prime,1)\in \Omega$ with $0<\norm[\ell_2]{A-A^\prime}$ sufficiently small such that there exists $t^\prime \in [0,1]^N$, $\sum_j t^\prime_j = 1$ satisfying
    \begin{equation*}
        \sum_{j=1}^{N} \frac{t^\prime_j(1-\varepsilon)}{a_j} e_j \in \Xi_{\text{bp},\varepsilon}(A,1) \quad \text{ and }  \quad\sum_{j=1}^{N} \frac{t^\prime_j(1-\varepsilon)}{a^\prime_j} e_j \in \Xi_{\text{bp},\varepsilon}(A^\prime,1).
    \end{equation*}
    The existence of $t^\prime$ follows from the description of the solution set of \eqref{eq:sparseprob} in \Cref{lm:SolSet}. This implies 
    \begin{align*}
        \text{dist}\big( \Xi_{\text{bp},\varepsilon}(A,1), \Xi_{\text{bp},\varepsilon}(A^\prime,1)\big)\!\! &\leq (1-\varepsilon) \nnnorm[\ell_2]{\sum_{j=1}^{N} t^\prime_j \Big(\frac{1}{a_j} -\frac{1}{a^\prime_j}\Big)  e_j} \leq (1-\varepsilon) \nnnorm[\ell_2]{\sum_{j=1}^{N} \Big(\frac{1}{a_j} -\frac{1}{a^\prime_j}\Big)  e_j}\\
        &= (1-\varepsilon) \sqrt{\sum_{j=1}^{N} \Big(\frac{a^\prime_j - a_j}{a_j a^\prime_j}\Big)^{\!2}}\leq   (1-\varepsilon) \norm[\ell_2]{A-A^\prime}
    \end{align*}
    and consequently we obtain
    \begin{equation*}
        \lim_{\delta \to 0^+} \sup_{\substack{(\hat{A}+A,1) \in \Omega : \\ \nnorm{\hat{A}} \leq \delta}} \frac{\text{dist}\big(\Xi_{\text{bp},\varepsilon}(A,1),\Xi_{\text{bp},\varepsilon}(\hat{A}+A,1)\big)}{\nnorm[\ell_2]{\hat{A}}} \leq (1-\varepsilon) < 1
    \end{equation*}
    so that
    \begin{equation*}
        \kappa(\Xi_{\text{bp},\varepsilon},\Omega) \leq 1.
    \end{equation*}
    Similarly, using that we can rewrite $A$ as
    \begin{equation*}
        A = \lambda \begin{pmatrix} \frac{a_1}{\lambda} & \frac{a_2}{\lambda} & \dots & \frac{a_N}{\lambda} \end{pmatrix},
    \end{equation*}
    where $\sfrac{a_j}{\lambda}> 1$ by construction, we can approach the estimation of the condition number of $\Xi_{\text{la},\lambda}$. Let $(A^\prime,1) \in \Omega$ such that $0<\norm[\ell_2]{A-A^\prime}$ is sufficiently small and there exists $t^\prime \in [0,1]^N$, $\sum_j t^\prime_j = 1$ satisfying
        \begin{equation*}
            \sum_{j=1}^{N} \frac{t^\prime_j}{\lambda \frac{a_j}{\lambda}} e_j \in \Xi_{\text{la},\lambda}(A,1) \quad \text{ and }  \quad\sum_{j=1}^{N} \frac{t^\prime_j}{\lambda \frac{a^\prime_j}{\lambda}} e_j \in \Xi_{\text{la},\lambda}(A^\prime,1).
        \end{equation*}
        The existence of $t^\prime$ follows again from the description of the solution set of \eqref{eq:lasso} in \Cref{lm:SolSet}. Therefore, we get
        \begin{align*}
            \text{dist}\big( \Xi_{\text{la},\lambda}(A,1), \Xi_{\text{la},\lambda}(A^\prime,1)\big) &\leq \nnnorm[\ell_2]{\sum_{j=1}^{N} t^\prime_j \Big(\frac{1}{a_j} -\frac{1}{a^\prime_j}\Big)  e_j} \leq \nnnorm[\ell_2]{\sum_{j=1}^{N} \Big(\frac{1}{a_j} -\frac{1}{a^\prime_j}\Big)  e_j}\\
            &= \sqrt{\sum_{j=1}^{N} \Big(\frac{a^\prime_j - a_j}{a_j a^\prime_j}\Big)^{\!2}}<  \norm[\ell_2]{A-A^\prime}
        \end{align*}
        and consequently 
        \begin{equation*}
            \lim_{\delta \to 0^+} \sup_{\substack{(\hat{A}+A,1) \in \Omega : \\ \nnorm{\hat{A}} \leq \delta}} \frac{\text{dist}\big(\Xi_{\text{la},\lambda}(A,1),\Xi_{\text{la},\lambda}(\hat{A}+A,1)\big)}{\nnorm[\ell_2]{\hat{A}}} < 1
        \end{equation*}
        so that
        \begin{equation*}
            \kappa(\Xi_{\text{la},\lambda},\Omega) \leq 1.
        \end{equation*}
    Finally, \Cref{lemma:generalNonApprox} (via similar constructions as in the \hyperref[pr:Thm4.3]{proof} of \Cref{thm:nonApprox}) yields the non-approximability of $\Xi^s$ \eqref{eq:OmegaCond} on $\Omega$ and thereby the result.
\end{proof}

\section{Discussion}\label{sec:discussion}
Our main goal was to analyze limitations of deep learning on digital hardware. To do so, we focused on inverse problems since deep learning approaches offer state of the art results in this application area. Nevertheless, there exist a wide range of approaches to tackle inverse problem with various (dis-)advantages. We want to exemplarily demonstrate the effect of our findings on these approaches. A particularly interesting one is given by iterative algorithms \cite{Candes2011Nesta, Eldar2021ComprImag}, which can, in fact, also be combined with deep learning techniques \cite{LeCun2010Lista, Eldar2021AlgUnroll}.  

\subsection{Iterative Algorithms in the Banach-Mazur Model}
The general idea is the following: Given the optimization problem \eqref{eq:sparseprob} or \eqref{eq:lasso}, a sequence of reconstructions $(x^{(n)})_{n\in\N} \subset \C^N$ is computed that converges (ideally) to a minimizer $x^\ast$ of the problem. Thereby, the iterations are defined by the choice of the initialization $x^{(0)} \in \C^N$ and
\begin{equation}\label{eq:ItAlg}
    x^{(n)} = T(I,\mu,x^{(n-1)}),
\end{equation}
where $T$ is an operator which has access to the input $I=(A,y,\varepsilon)$ (or $I=(A,y,\lambda)$), a set of parameters $\mu$ (e.g., a step size) and the current reconstruction $x^{(n-1)}$. In practical applications, additionally a stopping criterion is required to abort the iterative process once the reconstruction is expected to be sufficiently close to the solution $x^\ast$. Ideally, the algorithm would take an error parameter $\gamma>0$ as additional input and halt, if for some $m\in\N$ the reconstruction error of $x^{(m)}$ is below $\gamma$. The criterion is usually based on heuristics which typically lead to acceptable results, e.g., the difference of $\lVert x^{(n)}\rVert_{\ell_1}$ and $\norm[\ell_1]{x^\ast}$. However, they do not ensure closeness of $x^{(m)}$ and $x^\ast$. Then again, one can guarantee convergence or even a specific convergence rate of $(x^{(n)})_{n\in\N}$ to $x^\ast$ in certain circumstances \cite{Daubechies04SparseReg, Bredies2008LinConv, Beck2009FISTA}. 

Since our findings are connected to the computation device and not to a specific solution technique, they also affect implementations of iterative algorithms on digital hardware. 
We immediately observe that iterative algorithms can not be an effective procedure in Banach-Mazur sense.
Otherwise, the existence of an effective algorithm solving inverse problems would contradict their algorithmic non-solvability. This may be surprising in light of the convergence guarantees of $(x^{(n)})_{n\in\N}$, but we want to point out some obstacles.
\begin{itemize}
    \item The convergence of the sequence $(x^{(n)})_{n\in\N}$ may depend on the choice of the initialization $x^{(0)}$ and the parameters $\mu$. Although the existence of suitable initial values and parameters to guarantee convergence can be proven, this does not imply that these values can be effectively obtained, i.e., can be effectively computed. Thus, it is essential to consider the initialization as an integral part of the algorithm when checking for effectiveness.
    \item Although the sequence $(x^{(n)})_{n\in\N}$ may indeed converge towards $x^\ast$, the convergence may not be effective. Hence, the reconstructions computed by the algorithm come without explicit error bounds on the difference to the sought solution $x^\ast$. Therefore, there does not exist an effective stopping criterion, i.e., a criterion that aborts the computation once a reconstruction satisfies a pre-defined error bound. For more details about computability of exit-flag functionalities we refer to \cite{Boche2022ExitFlag}.
\end{itemize}
The above discussion shows that iterative algorithms need to rely on a stopping criterion based on heuristics. Although a chosen criterion may be well-adapted to a certain application, we can in general not expect the output of an iterative algorithm to come with guaranteed error bounds. We summarize our conclusions in the following statement, which is a direct implication of \Cref{thm:nonApprox}.
\begin{Corollary}\label{cor:ItAlg}
    There does not exist an effective stopping criterion on Turing machines for iterative algorithms applied to solving inverse problems via basis pursuit \eqref{eq:sparseprob} or lasso \eqref{eq:lasso}. In particular, using the notation introduced in \eqref{eq:ItAlg} we have: There does not exist a Turing machine that, given computable $I$ and a computable error parameter $\gamma>0$ as input, computes an initialization $x^{(0)}$ and a set of parameters $\mu$ such that the iterative process \eqref{eq:ItAlg} is stopped after finitely many steps $m$ with 
    \begin{equation}\label{eq:approxGamma}
        \nnorm{x^\ast - x^{(m)}} \leq \gamma. 
    \end{equation}
    In other words, there does not exist a Turing machine that, given $I$ and $\gamma$, computes an initialization $x^{(0)}$, a set of parameters $\mu$, and an index $m$ such that \eqref{eq:approxGamma} holds.
\end{Corollary}
We can further refine \Cref{cor:ItAlg} by also taking the lower bounds in \Cref{thm:nonApprox} into consideration. Then, the non-existence of the sought Turing machines for specific values of the error parameter $\gamma$ can be explicitly stated.

\subsection{Summary and Outlook}
We have shown that the computational problems posed by the mappings $\Xi_{\text{bp}, \varepsilon}$ and $\Xi_{\text{la}, \lambda}$, i.e., computing the solution(s) of basis pursuit \eqref{eq:sparseprob} and lasso \eqref{eq:lasso}, are not algorithmically solvable in Banach-Mazur sense for a relevant set of relaxation parameters. In other words, any single-valued restriction of the solution mappings --- i.e., fixing an arbitrary element of the possibly many solutions as the function value for each input --- is not Banach-Mazur computable. More importantly, we also exclude the possibility of algorithmically approximate the solution map in Banach-Mazur sense. We provide a negative answer and specify a lower bound on the achievable approximation accuracy (also known as the breakdown-epsilon \cite{bastounis21extended}). Here, the found limitations are not connected to ill-conditioning but do occur on well-conditioned inputs. 
Thus, another characterization of the best-case scenario in practical applications on digital hardware is provided, adding to \cite{bastounis21extended,colbrook21stable}. Moreover, the proof techniques can, in principle, be extended to other formulations as well. This is further evidence, reinforcing the findings in \cite{bastounis21extended,colbrook21stable}, that the computational limitations of solving inverse problems on digital hardware persist through various problem descriptions.

Additionally, the impossibility results also imply non-computability in Borel-Turing sense and thereby match some of the results in \cite{bastounis21extended,colbrook21stable} from a different starting point.
Borel-Turing computability is an appropriate notion to describe certifiably correct computations on digital hardware in the following sense: Given approximations of the exact input of arbitrary accuracy, a (digital) computer calculates approximations of the exact output such that the accuracy increases with the input accuracy. Even more, the computer also provides error bounds for each computed output, i.e., its worst-case distance to the exact solution. The non-computability and non-approximability results imply that there is no algorithm that is able to compute or approximate the sought minimizer of the problem in the described manner.

The algorithmic non-approximability result also has severe implications on the training of neural networks aspired to solve inverse problems via basis pursuit \eqref{eq:sparseprob} and lasso \eqref{eq:lasso} optimization. Given a finite set of samples $\tau_{P}$ as defined in \eqref{eq:tauSparse}, where $P$ is either \eqref{eq:sparseprob} or \eqref{eq:lasso}, the objective is to obtain a mapping that takes as input the measurements and the sampling operator of some unknown image and returns (some approximation of) the image as output. However, this mapping is not computable on digital machines. Therefore, there does not exist a general learning algorithm applicable to arbitrary inverse problems in this setting. In particular, there does not exist an algorithm that produces a neural network $\Phi_{A,\tau_{P}}$, which approximates a single-valued solution mapping with arbitrary precision, based on a training set with computable elements of the form $(A,\tau_{P})$. The crucial point is that we  either have to be content with a precision above the non-approximability threshold, i.e., the breakdown epsilon \cite{bastounis21extended}, or we can not algorithmically check to which degree the algorithmic computation of $\Phi_{A,\tau_{P}}$ succeeds. 

Therefore, in the latter case we can not guarantee the correctness of the output of $\Phi_{A,\tau_{P}}$. Although the output is not necessarily 'wrong' --- it may be very close to the correct reconstruction --- we have no way to algorithmically assess its correctness or deviation since we have no algorithmically constructible measure of distance to the correct reconstruction. Nevertheless, we may still apply $\Phi_{A,\tau_{P}}$ in practice but we have to be aware of its limitations. In particular, the algorithmic non-approximability of the reconstruction map implies that $\Phi_{A,\tau_{P}}$ is guaranteed to fail for specific inputs. Even more, in \cite{colbrook21stable}, for any $K>2$, inputs sets are constructed for which any learning algorithm will produce neural networks that approximate the corresponding solutions with at most $K-1$ correct digits. A possible work-around would be to characterize large classes of (computable) inputs of the inverse problems such that the corresponding reconstructions can be algorithmically computed. Ideally, the user or $\Phi_{A,\tau_{P}}$ itself should algorithmically recognize whether the given data allows for a successful computation which satisfies the specified error bound. Unfortunately, automating such an exit-flag functionality on Turing machines is not feasible for inverse problems \cite{bastounis21extended}.

Finally, we want to stress that in our analysis via Banach-Mazur computability the limitations of algorithmically solving inverse problems are inherently connected to digital hardware. In other computation models the circumstances may be different, as shown in \cite{Boche2022InvProb} for the BSS model describing (noise-free) analog computations. Although digital hardware is prevalent in basically any field of scientific computing, this paradigm may shift in the future. In deep learning, the emergence of neuromorphic hardware \cite{Mead90NeuroComp, schuman17survey} --- a combination of digital and analog computations where elements of a computer are modeled after systems in the human brain and nervous system --- potentially offers a solution to overcome the computational barriers on digital hardware.

\bibliographystyle{IEEEtran}
\bibliography{IEEEabrv,main}

\begin{thebibliography}{10}
\providecommand{\url}[1]{#1}
\csname url@samestyle\endcsname
\providecommand{\newblock}{\relax}
\providecommand{\bibinfo}[2]{#2}
\providecommand{\BIBentrySTDinterwordspacing}{\spaceskip=0pt\relax}
\providecommand{\BIBentryALTinterwordstretchfactor}{4}
\providecommand{\BIBentryALTinterwordspacing}{\spaceskip=\fontdimen2\font plus
\BIBentryALTinterwordstretchfactor\fontdimen3\font minus \fontdimen4\font\relax}
\providecommand{\BIBforeignlanguage}[2]{{%
\expandafter\ifx\csname l@#1\endcsname\relax
\typeout{** WARNING: IEEEtran.bst: No hyphenation pattern has been}%
\typeout{** loaded for the language `#1'. Using the pattern for}%
\typeout{** the default language instead.}%
\else
\language=\csname l@#1\endcsname
\fi
#2}}
\providecommand{\BIBdecl}{\relax}
\BIBdecl

\bibitem{poonen14}
B.~Poonen, \emph{Undecidable problems: a sampler}.\hskip 1em plus 0.5em minus 0.4em\relax Cambridge University Press, 2014, p. 211–241.

\bibitem{Hilbert00Problems}
D.~Hilbert, ``{Mathematical problems},'' \emph{Bulletin of the American Mathematical Society}, vol.~8, no.~10, pp. 437 -- 479, 1902.

\bibitem{Matiyasevich70Diophantine}
Y.~V. Matiyasevich, ``Enumerable sets are diophantine,'' \emph{Soviet Mathematics}, vol.~11, no.~2, pp. 354 -- 357, 1970.

\bibitem{McCulloch43NNs}
W.~S. McCulloch and W.~Pitts, ``A logical calculus of the ideas immanent in nervous activity,'' \emph{The Bulletin of Mathematical Biophysics}, vol.~5, no.~4, pp. 115--133, 1943.

\bibitem{He2015DelvingDI}
K.~He, X.~Zhang, S.~Ren, and J.~Sun, ``Delving {D}eep into {R}ectifiers: {S}urpassing {H}uman-{L}evel {P}erformance on {I}mage{N}et {C}lassification,'' \emph{2015 IEEE International Conference on Computer Vision (ICCV)}, pp. 1026--1034, 2015.

\bibitem{Silver16Go}
D.~Silver, A.~Huang, C.~J. Maddison, A.~Guez, L.~Sifre, G.~van~den Driessche, J.~Schrittwieser, I.~Antonoglou, V.~Panneershelvam, M.~Lanctot, S.~Dieleman, D.~Grewe, J.~Nham, N.~Kalchbrenner, I.~Sutskever, T.~Lillicrap, M.~Leach, K.~Kavukcuoglu, T.~Graepel, and D.~Hassabis, ``Mastering the game of {G}o with deep neural networks and tree search,'' \emph{Nature}, vol. 529, pp. 484--503, 2016.

\bibitem{Brown20GPT3}
T.~Brown, B.~Mann, N.~Ryder, M.~Subbiah, J.~D. Kaplan, P.~Dhariwal, A.~Neelakantan, P.~Shyam, G.~Sastry, A.~Askell, S.~Agarwal, A.~Herbert-Voss, G.~Krueger, T.~Henighan, R.~Child, A.~Ramesh, D.~Ziegler, J.~Wu, C.~Winter, C.~Hesse, M.~Chen, E.~Sigler, M.~Litwin, S.~Gray, B.~Chess, J.~Clark, C.~Berner, S.~McCandlish, A.~Radford, I.~Sutskever, and D.~Amodei, ``Language {M}odels are {F}ew-{S}hot {L}earners,'' in \emph{Advances in Neural Information Processing Systems}, H.~Larochelle, M.~Ranzato, R.~Hadsell, M.~F. Balcan, and H.~Lin, Eds., vol.~33.\hskip 1em plus 0.5em minus 0.4em\relax Curran Associates, Inc., 2020, pp. 1877--1901.

\bibitem{Senior20DeepFold}
A.~W. Senior, R.~Evans, J.~Jumper, J.~Kirkpatrick, L.~Sifre, T.~Green, C.~Qin, A.~Žídek, A.~W.~R. Nelson, A.~Bridgland, H.~Penedones, S.~Petersen, K.~Simonyan, S.~Crossan, P.~Kohli, D.~T. Jones, D.~Silver, K.~Kavukcuoglu, and D.~Hassabis, ``Improved protein structure prediction using potentials from deep learning,'' \emph{Nature}, vol. 577, pp. 706--710, 2020.

\bibitem{LeCun15DL}
Y.~LeCun, Y.~Bengio, and G.~Hinton, ``Deep learning,'' \emph{Nature}, vol. 521, pp. 436--444, 2015.

\bibitem{Rumelhart86BP}
D.~E. Rumelhart, G.~E. Hinton, and R.~J. Williams, ``Learning representations by back-propagating errors,'' \emph{Nature}, vol. 323, pp. 533--536, 1986.

\bibitem{Zu18AutoMap}
B.~Zhu, J.~Z. Liu, S.~F. Cauley, B.~R. Rosen, and M.~S. Rosen, ``Image reconstruction by domain-transform manifold learning,'' \emph{Nature}, vol. 555, pp. 487--492, 2018.

\bibitem{Arridge2019SolvingIP}
S.~R. Arridge, P.~Maass, O.~{\"O}ktem, and C.-B. Sch{\"o}nlieb, ``Solving inverse problems using data-driven models,'' \emph{Acta Numerica}, vol.~28, pp. 1 -- 174, 2019.

\bibitem{Bubba19Shearlet}
T.~A. Bubba, G.~Kutyniok, M.~Lassas, M.~M{\"a}rz, W.~Samek, S.~Siltanen, and V.~Srinivasan, ``Learning the {I}nvisible: A {H}ybrid {D}eep {L}earning-{S}hearlet {F}ramework for {L}imited {A}ngle {C}omputed {T}omography,'' \emph{Inverse Problems}, vol.~35, no.~6, May 2019.

\bibitem{Yang16MRIDL}
Y.~Yang, J.~Sun, H.~Li, and Z.~Xu, ``Deep {ADMM}-{N}et for {C}ompressive {S}ensing {MRI},'' in \emph{Advances in Neural Information Processing Systems}, D.~Lee, M.~Sugiyama, U.~Luxburg, I.~Guyon, and R.~Garnett, Eds., vol.~29.\hskip 1em plus 0.5em minus 0.4em\relax Curran Associates, Inc., 2016.

\bibitem{Hammernik18MRIDL2}
K.~Hammernik, T.~Klatzer, E.~Kobler, M.~P. Recht, D.~K. Sodickson, T.~Pock, and F.~Knoll, ``Learning a variational network for reconstruction of accelerated {MRI} data,'' \emph{Magnetic Resonance in Medicine}, vol.~79, no.~6, pp. 3055--3071, 2018.

\bibitem{Chen2018LowLightPhoto}
C.~Chen, Q.~Chen, J.~Xu, and V.~Koltun, ``Learning to {S}ee in the {D}ark,'' \emph{arXiv:1805.01934}, 2018.

\bibitem{Rivenson17DLmicroscopy}
Y.~Rivenson, Z.~G\"{o}r\"{o}cs, H.~G\"{u}naydin, Y.~Zhang, H.~Wang, and A.~Ozcan, ``Deep learning microscopy,'' \emph{Optica}, vol.~4, no.~11, pp. 1437--1443, Nov 2017.

\bibitem{Araya18DLtomography}
M.~Araya-Polo, J.~Jennings, A.~Adler, and T.~Dahlke, ``Deep-learning tomography,'' \emph{The Leading Edge}, vol.~37, no.~1, pp. 58--66, 2018.

\bibitem{Boche20LTI}
H.~Boche and V.~Pohl, ``Turing {M}eets {C}ircuit {T}heory: Not {E}very {C}ontinuous-{T}ime {LTI} {S}ystem {C}an be {S}imulated on a {D}igital {C}omputer,'' \emph{IEEE Transactions on Circuits and Systems I: Regular Papers}, vol.~67, no.~12, pp. 5051--5064, 2020.

\bibitem{Weihrauch02WaveProp}
K.~Weihrauch and N.~Zhong, ``Is wave propagation computable or can wave computers beat the {T}uring machine?'' \emph{Proceedings of the London Mathematical Society}, vol.~85, pp. 312 -- 332, Sep 2002.

\bibitem{Zhong14EffConvDE}
S.-M. Sun and N.~Zhong, ``On {E}ffective {C}onvergence of {N}umerical {S}olutions for {D}ifferential {E}quations,'' \emph{ACM Trans. Comput. Theory}, vol.~6, no.~1, Mar 2014.

\bibitem{Graca21CompDE}
D.~S. Gra\c{c}a and N.~Zhong, \emph{Computability of {D}ifferential {E}quations}.\hskip 1em plus 0.5em minus 0.4em\relax Cham: Springer International Publishing, 2021, pp. 71--99.

\bibitem{Fettweis20226G}
G.~P. Fettweis and H.~Boche, ``On 6{G} and {T}rustworthiness,'' \emph{Commun. ACM}, vol.~65, no.~4, p. 48–49, Mar 2022.

\bibitem{Turing36Entscheidung}
A.~M. Turing, ``On {C}omputable {N}umbers, with an {A}pplication to the {E}ntscheidungsproblem,'' \emph{Proceedings of the London Mathematical Society}, vol. s2-42, no.~1, pp. 230--265, 1936.

\bibitem{PourEl97WaveEq}
M.~B. Pour-El and N.~Zhong, ``The {W}ave {E}quation with {C}omputable {I}nitial {D}ata {W}hose {U}nique {S}olution {I}s {N}owhere {C}omputable,'' \emph{Mathematical Logic Quarterly}, vol.~43, no.~4, pp. 499--509, 1997.

\bibitem{Elkouss2018MemoryEC}
D.~Elkouss and D.~P{\'e}rez-Garc{\'i}a, ``Memory effects can make the transmission capability of a communication channel uncomputable,'' \emph{Nature Communications}, vol.~9, no.~1, Mar 2018.

\bibitem{Schaefer2019TuringMS}
R.~F. Schaefer, H.~Boche, and H.~V. Poor, ``Turing {M}eets {S}hannon: On the {A}lgorithmic {C}omputability of the {C}apacities of {S}ecure {C}ommunication {S}ystems ({I}nvited {P}aper),'' \emph{2019 IEEE 20th International Workshop on Signal Processing Advances in Wireless Communications (SPAWC)}, pp. 1--5, 2019.

\bibitem{Boche20SpecFac}
H.~Boche and V.~Pohl, ``On the {A}lgorithmic {S}olvability of {S}pectral {F}actorization and {A}pplications,'' \emph{IEEE Transactions on Information Theory}, vol.~66, no.~7, pp. 4574--4592, 2020.

\bibitem{Boche20BandlimitedSignals}
H.~Boche and U.~J. Mönich, ``Turing {C}omputability of {F}ourier {T}ransforms of {B}andlimited and {D}iscrete {S}ignals,'' \emph{IEEE Transactions on Signal Processing}, vol.~68, pp. 532--547, 2020.

\bibitem{Szegedy14AdvEx}
C.~Szegedy, W.~Zaremba, I.~Sutskever, J.~Bruna, D.~Erhan, I.~Goodfellow, and R.~Fergus, ``Intriguing properties of neural networks,'' in \emph{2nd International Conference on Learning Representations, {ICLR} 2014, Conference Track Proceedings}, Y.~Bengio and Y.~LeCun, Eds., 2014.

\bibitem{Antun2020InstabilitiesDL}
V.~Antun, F.~Renna, C.~Poon, B.~Adcock, and A.~C. Hansen, ``On instabilities of deep learning in image reconstruction and the potential costs of {AI},'' \emph{Proceedings of the National Academy of Sciences}, vol. 117, no.~48, pp. 30\,088--30\,095, 2020.

\bibitem{Xie20XAI}
N.~Xie, G.~Ras, M.~van Gerven, and D.~Doran, ``Explainable {D}eep {L}earning: {A} {F}ield {G}uide for the {U}ninitiated,'' \emph{arXiv:2004.14545}, 2020.

\bibitem{Kolek2021ratedistortion}
S.~Kolek, D.~A. Nguyen, R.~Levie, J.~Bruna, and G.~Kutyniok, \emph{A {R}ate-{D}istortion {F}ramework for {E}xplaining {B}lack-box {M}odel {D}ecisions}.\hskip 1em plus 0.5em minus 0.4em\relax Cham: Springer International Publishing, 2022, pp. 91--115.

\bibitem{Adcock20gap}
B.~Adcock and N.~Dexter, ``The {G}ap between {T}heory and {P}ractice in {F}unction {A}pproximation with {D}eep {N}eural {N}etworks,'' \emph{SIAM Journal on Mathematics of Data Science}, vol.~3, no.~2, pp. 624--655, 2021.

\bibitem{Berner2021modernMathDL}
J.~Berner, P.~Grohs, G.~Kutyniok, and P.~Petersen, ``The {M}odern {M}athematics of {D}eep {L}earning,'' \emph{arXiv:2105.04026}, 2021.

\bibitem{colbrook21stable}
M.~J. Colbrook, V.~Antun, and A.~C. Hansen, ``The difficulty of computing stable and accurate neural networks: {O}n the barriers of deep learning and {S}male’s 18th problem,'' \emph{Proceedings of the National Academy of Sciences}, vol. 119, no.~12, 2022.

\bibitem{gottschling20troublesome}
N.~M. Gottschling, V.~Antun, B.~Adcock, and A.~C. Hansen, ``The troublesome kernel: why deep learning for inverse problems is typically unstable,'' \emph{arXiv:2001.01258}, 2020.

\bibitem{bastounis21extended}
A.~Bastounis, A.~C. Hansen, and V.~Vlačić, ``The extended {S}male's 9th problem -- {O}n computational barriers and paradoxes in estimation, regularisation, computer-assisted proofs and learning,'' \emph{arXiv:2110.15734}, 2021.

\bibitem{bastounis2021Classification}
------, ``The mathematics of adversarial attacks in {AI} -- {W}hy deep learning is unstable despite the existence of stable neural networks,'' \emph{arXiv:2109.06098}, 2021.

\bibitem{Ben2015SCI}
J.~Ben-Artzi, M.~J. Colbrook, A.~C. Hansen, O.~Nevanlinna, and M.~Seidel, ``Computing {S}pectra -- {O}n the {S}olvability {C}omplexity {I}ndex {H}ierarchy and {T}owers of {A}lgorithms,'' \emph{arXiv:1508.03280}, 2015.

\bibitem{Blum89BSS}
L.~Blum, M.~Shub, and S.~Smale, ``{On a theory of computation and complexity over the real numbers: $NP$- completeness, recursive functions and universal machines},'' \emph{Bulletin (New Series) of the American Mathematical Society}, vol.~21, no.~1, pp. 1 -- 46, 1989.

\bibitem{Boche2022InvProb}
H.~Boche, A.~Fono, and G.~Kutyniok, ``{I}nverse {P}roblems {A}re {S}olvable on {R}eal {N}umber {S}ignal {P}rocessing {H}ardware,'' \emph{arXiv:2204.02066v2}, 2022.

\bibitem{Candes06Stable}
E.~J. Candès, J.~K. Romberg, and T.~Tao, ``Stable {S}ignal {R}ecovery from {I}ncomplete and {I}naccurate {M}easurements,'' \emph{Communications on Pure and Applied Mathematics}, vol.~59, no.~8, pp. 1207--1223, 2006.

\bibitem{Tropp06Relax}
J.~Tropp, ``Just relax: convex programming methods for identifying sparse signals in noise,'' \emph{IEEE Transactions on Information Theory}, vol.~52, no.~3, pp. 1030--1051, 2006.

\bibitem{Belloni11SquareRootLasso}
A.~Belloni, V.~Chernozhukov, and L.~Wang, ``Square-{R}oot {L}asso: {P}ivotal {R}ecovery of {S}parse {S}ignals via {C}onic {P}rogramming,'' \emph{Biometrika}, vol.~98, no.~4, pp. 791--806, Dec 2011.

\bibitem{Akhtar18ThreatAdvAtt}
N.~Akhtar and A.~Mian, ``Threat of {A}dversarial {A}ttacks on {D}eep {L}earning in {C}omputer {V}ision: A {S}urvey,'' \emph{IEEE Access}, vol.~6, pp. 14\,410--14\,430, 2018.

\bibitem{Carlini18AudioAdvEx}
N.~Carlini and D.~Wagner, ``Audio {A}dversarial {E}xamples: {T}argeted {A}ttacks on {S}peech-to-{T}ext,'' in \emph{2018 IEEE Security and Privacy Workshops (SPW)}, 2018, pp. 1--7.

\bibitem{Moosavi16DeepFool}
S.-M. Moosavi-Dezfooli, A.~Fawzi, and P.~Frossard, ``Deepfool: A {S}imple and {A}ccurate {M}ethod to {F}ool {D}eep {N}eural {N}etworks,'' in \emph{2016 IEEE Conference on Computer Vision and Pattern Recognition (CVPR)}, 2016, pp. 2574--2582.

\bibitem{Finlayson19AdvAttMed}
S.~G. Finlayson, J.~D. Bowers, J.~Ito, J.~L. Zittrain, A.~L. Beam, and I.~S. Kohane, ``Adversarial attacks on medical machine learning,'' \emph{Science}, vol. 363, no. 6433, pp. 1287--1289, 2019.

\bibitem{Madry18AdvTraining}
A.~Madry, A.~Makelov, L.~Schmidt, D.~Tsipras, and A.~Vladu, ``Towards {D}eep {L}earning {M}odels {R}esistant to {A}dversarial {A}ttacks,'' in \emph{6th International Conference on Learning Representations, {ICLR} 2018, Conference Track Proceedings}, 2018.

\bibitem{Papernot16Distillation}
N.~Papernot, P.~McDaniel, X.~Wu, S.~Jha, and A.~Swami, ``Distillation as a {D}efense to {A}dversarial {P}erturbations {A}gainst {D}eep {N}eural {N}etworks,'' in \emph{2016 IEEE Symposium on Security and Privacy (SP)}.\hskip 1em plus 0.5em minus 0.4em\relax Los Alamitos, CA, USA: IEEE Computer Society, May 2016, pp. 582--597.

\bibitem{Ilyas19AdvExNotBugs}
A.~Ilyas, S.~Santurkar, D.~Tsipras, L.~Engstrom, B.~Tran, and A.~Madry, \emph{Adversarial {E}xamples {A}re {N}ot {B}ugs, {T}hey {A}re {F}eatures}.\hskip 1em plus 0.5em minus 0.4em\relax Red Hook, NY, USA: Curran Associates Inc., 2019.

\bibitem{Tsipras18RobustnessOdds}
D.~Tsipras, S.~Santurkar, L.~Engstrom, A.~Turner, and A.~Madry, ``Robustness {M}ay {B}e at {O}dds with {A}ccuracy,'' in \emph{7th International Conference on Learning Representations}, 2019.

\bibitem{gilmer2018adversarial}
J.~Gilmer, L.~Metz, F.~Faghri, S.~S. Schoenholz, M.~Raghu, M.~Wattenberg, and I.~Goodfellow, ``Adversarial {S}pheres,'' \emph{arXiv:1801.02774v3}, 2018.

\bibitem{Mead90NeuroComp}
C.~Mead, ``Neuromorphic electronic systems,'' \emph{Proceedings of the IEEE}, vol.~78, no.~10, pp. 1629--1636, 1990.

\bibitem{schuman17survey}
C.~D. Schuman, T.~E. Potok, R.~M. Patton, J.~D. Birdwell, M.~E. Dean, G.~S. Rose, and J.~S. Plank, ``A {S}urvey of {N}euromorphic {C}omputing and {N}eural {N}etworks in {H}ardware,'' \emph{arXiv:1705.06963}, 2017.

\bibitem{vonNeumann45Architektur}
J.~von Neumann, ``First {D}raft of a {R}eport on the {EDVAC},'' \emph{IEEE Annals of the History of Computing}, vol.~15, no.~4, pp. 27--75, 1993.

\bibitem{Goedel31Unentscheidbar}
K.~Gödel, ``{\"U}ber formal unentscheidbare {S}ätze der {P}rincipia {M}athematica und verwandter {S}ysteme,'' \emph{Monatshefte für Mathematik und Physik}, vol.~38, no.~1, pp. 173--198, 1931.

\bibitem{Myhill71Noncomp}
J.~Myhill, ``A recursive function, defined on a compact interval and having a continuous derivative that is not recursive.'' \emph{Michigan Mathematical Journal}, vol.~18, no.~2, pp. 97 -- 98, 1971.

\bibitem{Soare87RecursivelyES}
R.~I. Soare, ``Recursively enumerable sets and degrees,'' \emph{Bulletin of the American Mathematical Society}, vol.~84, pp. 1149--1181, 1987.

\bibitem{Weihrauch00CompAnal}
K.~Weihrauch, \emph{Computable Analysis: An Introduction}.\hskip 1em plus 0.5em minus 0.4em\relax Berlin, Heidelberg: Springer-Verlag, 2000.

\bibitem{Pour-El17Computability}
M.~B. Pour-El and J.~I. Richards, \emph{Computability in {A}nalysis and {P}hysics}, ser. Perspectives in Logic.\hskip 1em plus 0.5em minus 0.4em\relax Cambridge University Press, 2017.

\bibitem{AvigadBrattka14CompAnal}
J.~Avigad and V.~Brattka, \emph{Computability and analysis: the legacy of {A}lan {T}uring}, ser. Lecture Notes in Logic.\hskip 1em plus 0.5em minus 0.4em\relax Cambridge University Press, 2014, p. 1–47.

\bibitem{Kleene36Recursive}
S.~Kleene, ``General recursive functions of natural numbers.'' \emph{Mathematische Annalen}, vol. 112, pp. 727--742, 1936.

\bibitem{Turing37Equivalence}
A.~M. Turing, ``Computability and lambda-{D}efinability,'' \emph{Journal of Symbolic Logic}, vol.~2, no.~4, p. 153–163, 1937.

\bibitem{Brattka2016CompAnalHistory}
V.~Brattka, ``{C}omputability and {A}nalysis, a {H}istorical {A}pproach,'' in \emph{Pursuit of the Universal}.\hskip 1em plus 0.5em minus 0.4em\relax Springer International Publishing, 2016, pp. 45--57.

\bibitem{Boche2020Fekete}
H.~Boche, Y.~Böck, and C.~Deppe, ``On {E}ffective {C}onvergence in {F}ekete's {L}emma and {R}elated {C}ombinatorial {P}roblems in {I}nformation {T}heory,'' \emph{arXiv:2010.09896}, 2020.

\bibitem{Boche2022PseudoInverse}
H.~Boche, A.~Fono, and G.~Kutyniok, ``Non-{C}omputability of the {P}seudoinverse on {D}igital {C}omputers,'' \emph{arXiv:2212.02940}, 2022.

\bibitem{Daubechies04SparseReg}
I.~Daubechies, M.~Defrise, and C.~De~Mol, ``An iterative thresholding algorithm for linear inverse problems with a sparsity constraint,'' \emph{Communications on Pure and Applied Mathematics}, vol.~57, no.~11, pp. 1413--1457, 2004.

\bibitem{Candes06RobUnc}
E.~Candes, J.~Romberg, and T.~Tao, ``Robust uncertainty principles: exact signal reconstruction from highly incomplete frequency information,'' \emph{IEEE Transactions on Information Theory}, vol.~52, no.~2, pp. 489--509, 2006.

\bibitem{Donoho06CompSens}
D.~Donoho, ``Compressed sensing,'' \emph{IEEE Transactions on Information Theory}, vol.~52, no.~4, pp. 1289--1306, 2006.

\bibitem{Candes06UnivEncStrat}
E.~J. Candes and T.~Tao, ``Near-{O}ptimal {S}ignal {R}ecovery {F}rom {R}andom {P}rojections: {U}niversal {E}ncoding {S}trategies?'' \emph{IEEE Transactions on Information Theory}, vol.~52, no.~12, pp. 5406--5425, 2006.

\bibitem{Candes05DecLP}
E.~Candes and T.~Tao, ``Decoding by linear programming,'' \emph{IEEE Transactions on Information Theory}, vol.~51, no.~12, pp. 4203--4215, 2005.

\bibitem{Schlemper18CNNvsCS}
J.~Schlemper, J.~Caballero, J.~V. Hajnal, A.~N. Price, and D.~Rueckert, ``A {D}eep {C}ascade of {C}onvolutional {N}eural {N}etworks for {D}ynamic {MR} {I}mage {R}econstruction,'' \emph{IEEE Transactions on Medical Imaging}, vol.~37, no.~2, pp. 491--503, 2018.

\bibitem{Jin17DCNNInvProb}
K.~H. Jin, M.~T. McCann, E.~Froustey, and M.~Unser, ``Deep {C}onvolutional {N}eural {N}etwork for {I}nverse {P}roblems in {I}maging,'' \emph{IEEE Transactions on Image Processing}, vol.~26, no.~9, pp. 4509--4522, 2017.

\bibitem{Adler17DNNInvProb}
J.~Adler and O.~Öktem, ``Solving ill-posed inverse problems using iterative deep neural networks,'' \emph{Inverse Problems}, vol.~33, no.~12, p. 124007, Nov 2017.

\bibitem{Ongie20DLInvProb}
G.~Ongie, A.~Jalal, C.~A. Metzler, R.~G. Baraniuk, A.~G. Dimakis, and R.~Willett, ``Deep {L}earning {T}echniques for {I}nverse {P}roblems in {I}maging,'' \emph{IEEE Journal on Selected Areas in Information Theory}, vol.~1, no.~1, pp. 39--56, 2020.

\bibitem{Mousavi15DLvCSSigRec}
A.~Mousavi, A.~B. Patel, and R.~G. Baraniuk, ``A deep learning approach to structured signal recovery,'' in \emph{2015 53rd Annual Allerton Conference on Communication, Control, and Computing (Allerton)}, 2015, pp. 1336--1343.

\bibitem{Goodfellow16DL}
I.~Goodfellow, Y.~Bengio, and A.~Courville, \emph{Deep Learning}.\hskip 1em plus 0.5em minus 0.4em\relax MIT Press, 2016, \url{http://www.deeplearningbook.org}.

\bibitem{Boche2020SmeetsT}
H.~Boche, R.~F. Schaefer, and H.~V. Poor, ``{S}hannon meets {T}uring: {N}on-{C}omputability and {N}on-{A}pproximability of the {F}inite {S}tate {C}hannel {C}apacity,'' \emph{Communications in Information and Systems}, vol.~20, no.~2, pp. 81--116, 2020.

\bibitem{Blum98ComplRealComp}
L.~Blum, F.~Cucker, M.~Shub, and S.~Smale, \emph{Complexity and Real Computation}.\hskip 1em plus 0.5em minus 0.4em\relax New York: Springer Verlag, 1998.

\bibitem{Bürgisser2013condition}
P.~B{\"u}rgisser and F.~Cucker, \emph{Condition:\! The Geometry of Numerical Algorithms}, ser. \selectlanguage{ngerman}Grundlehren der mathematischen Wissenschaften.\hskip 1em plus 0.5em minus 0.4em\relax Springer Berlin Heidelberg, 2013.

\bibitem{Candes2011Nesta}
S.~Becker, J.~Bobin, and E.~J. Cand\`{e}s, ``{NESTA}: {A} {F}ast and {A}ccurate {F}irst-{O}rder {M}ethod for {S}parse {R}ecovery,'' \emph{SIAM Journal on Imaging Sciences}, vol.~4, no.~1, pp. 1--39, 2011.

\bibitem{Eldar2021ComprImag}
O.~Drori, A.~Mamistvalov, O.~Solomon, and Y.~C. Eldar, ``Compressed {U}ltrasound {I}maging: {F}rom {S}ub-{N}yquist {R}ates to {S}uper {R}esolution,'' \emph{IEEE BITS the Information Theory Magazine}, vol.~1, no.~1, pp. 27--44, 2021.

\bibitem{LeCun2010Lista}
K.~Gregor and Y.~LeCun, ``Learning {F}ast {A}pproximations of {S}parse {C}oding,'' in \emph{Proceedings of the 27th International Conference on International Conference on Machine Learning}, ser. ICML'10.\hskip 1em plus 0.5em minus 0.4em\relax Madison, WI, USA: Omnipress, 2010, p. 399–406.

\bibitem{Eldar2021AlgUnroll}
V.~Monga, Y.~Li, and Y.~C. Eldar, ``Algorithm {U}nrolling: {I}nterpretable, {E}fficient {D}eep {L}earning for {S}ignal and {I}mage {P}rocessing,'' \emph{IEEE Signal Processing Magazine}, vol.~38, no.~2, pp. 18--44, 2021.

\bibitem{Bredies2008LinConv}
K.~Bredies and D.~Lorenz, ``Linear {C}onvergence of {I}terative {S}oft-{T}hresholding,'' \emph{Journal of Fourier Analysis and Applications volume}, vol.~14, no.~4, pp. 813–--837, 2008.

\bibitem{Beck2009FISTA}
A.~Beck and M.~Teboulle, ``A {F}ast {I}terative {S}hrinkage-{T}hresholding {A}lgorithm for {L}inear {I}nverse {P}roblems,'' \emph{SIAM Journal on Imaging Sciences}, vol.~2, no.~1, pp. 183--202, 2009.

\bibitem{Boche2022ExitFlag}
H.~Boche and V.~Pohl, ``On {N}on-{D}etectability of {N}on-{C}omputability and the {D}egree of {N}on-{C}omputability of {S}olutions of {C}ircuit and {W}ave {E}quations on {D}igital {C}omputers,'' \emph{IEEE Transactions on Information Theory}, vol.~68, no.~8, pp. 5561--5578, 2022.

\end{thebibliography}

\vfill

\end{document}